\theoremstyle{definition}
\newtheorem{definition}{Definition}[section]
\theoremstyle{plain}
\newtheorem{theorem}[definition]{Theorem}
\newtheorem{example}[definition]{Example}
\newtheorem{lemma}[definition]{Lemma}
\DeclarePairedDelimiter\abs{\lvert}{\rvert}%
\DeclarePairedDelimiter\norm{\lVert}{\rVert}%
\let\oldabs\abs
\def\abs{\@ifstar{\oldabs}{\oldabs*}}
\let\oldnorm\norm
\def\norm{\@ifstar{\oldnorm}{\oldnorm*}}
\tikzset{
node/.style={circle, draw=black},
block/.style={draw=black}
}
\tikzset{every picture/.style={line width=0.6pt}}
\tikzset{
  diedge/.style = {
   semithick, ->, >={[round,sep,bend]Stealth}
  }
}
\tikzset{
  biedge/.style = {
   semithick, <->, >={[round,sep,bend]Stealth}
  }
}
\newread\hlfile
\newcommand{\inputitemized}[1]{%
  \begingroup
    \openin\hlfile=#1
    \read\hlfile to\linehl
    \loop\unless\ifeof\hlfile
      \item \linehl
      \read\hlfile to\linehl
    \repeat
    \closein\hlfile
  \endgroup
}
\journal{International Journal of Approximate Reasoning}
\begin{document}

\begin{frontmatter}

\title{Denoising the Future: Top-p Distributions for Moving Through Time\tnoteref{t1}} %

\author[aff1]{Florian Andreas Marwitz\corref{cor1}}
\ead{florian.marwitz@uni-hamburg.de}

\author[aff1]{Ralf Möller}

\author[aff2]{Magnus Bender}

\author[aff1]{Marcel Gehrke}

\affiliation[aff1]{organization={Institute of Humanities-Centered Artificial Intelligence, University of Hamburg},
            addressline={Warburgstra{\ss}e 28}, 
            city={Hamburg},
            postcode={20354},
            country={Germany}}

\affiliation[aff2]{organization={Department of Management, Aarhus University},
            addressline={Universitetsbyen 61},
            city={Aarhus C},
            postcode={8000},
            country={Denmark}}

\cortext[cor1]{Corresponding author}
\tnotetext[t1]{The research for this paper was funded by the Deutsche Forschungsgemeinschaft (DFG, German Research Foundation) under Germany's Excellence Strategy – EXC 2176 'Understanding Written Artefacts: Material, Interaction and Transmission in Manuscript Cultures', project no. 390893796. The research was conducted within the scope of the Centre for the Study of Manuscript Cultures (CSMC) at Universität Hamburg.}

\begin{abstract}
Inference in dynamic probabilistic models is a complex task involving expensive operations.
In particular, for Hidden Markov Models, the whole state space has to be enumerated for advancing in time.
Even states with negligible probabilities are considered, resulting in computational inefficiency and possibly increased noise due to the propagation of unlikely probability mass.
We propose to denoise the future and speed up inference by using only the top-$p$ transitions, i.e., the most probable transitions with accumulated probability $p$.
We show that the error introduced by using only the top-$p$ transitions is bound by $p$ and the so-called minimal mixing rate of the underlying model.
We also show the same bound when using only the top-$p$ states, which is the same, just for the states.
Moreover, in our empirical evaluation, we show that we can, when using top-$p$ transitions, expect speedups of at least an order of magnitude, while the error in terms of total variation distance is below $0.09$.
Using the top-$p$ states is slower than top-$p$ transitions since we iterate over all states in each time step and sometimes lead empirically to a higher error.
With a more sophisticated implementation, the speed-up, if any, would be really small.
While top-$p$ transitions look really promising, we cannot recommend top-$p$ states and discuss why it is of the slower, while the error does not necessarily decrease.
\end{abstract}

\begin{keyword}

Hidden Markov Model \sep Probabilistic Graphical Models \sep Language Models
\end{keyword}

\end{frontmatter}

\section{Introduction}
Inference in dynamic probabilistic graphical models (dynamic PGMs)~\cite{koller2009probabilistic} is in general NP-hard: The probability distribution over all states has to be advanced in time.
However, certain future states are highly unlikely.
Our goal is to speed up the inference time and denoise the future by only considering events with a high probability mass.
To briefly illustrate our idea: Suppose we have a simple weather model and the sun is shining.
In the next time step, the conditions partly cloudy and light rain are much more probable than, e.g., heavy thunderstorm, which we then do not want to consider.
In this paper, we propose to use only the \emph{top-$p$} events, i.e., the most probable events with accumulated probability of at least $p$, for inference, where we only iterate about the more likely states, for inference in Hidden Markov Models (HMMs) as a representative for PGMs.
This way, we reduce the runtime while retaining control over the induced error by setting $p$ appropriately.
We can denoise the future either \textit{offline}, i.e., adapt the model parameters to only consider a certain probability mass over the transitions and thereby making the model sparse to proceed in time, in an offline preprocessing step, or \textit{online}, i.e., advance only events of the current state with a certain probability mass, to obtain a reduced state space, while accounting for observations.
We call the offline approach HMM$_{off}$ and the online approach HMM$_{on}$.
These approaches can also be viewed as a generative AI approach, we generate one possible world, for which we have probability and error assertions.

Let us first focus on HMM$_{off}$ and start by shortly motivate the runtime savings with language models, e.g., Llama 3~\cite{llama3}.
In language models, we have multibillion parameters and more possible events to be generated than words in the English language.
Now, take for example the words \emph{The weather is}, then the probability of the next word being \emph{rainy} or \emph{sunny} is much more probable than \emph{eating}.
With our \emph{top-$p$} approach, we can thus spare the iteration over lots of unlikely states.
Let us dive a bit deeper into what we actually do by going back to our initial weather model:
When we have the weather condition \emph{sunny}, the next ones could be \emph{partly cloudy}, \emph{light rain}, and \emph{foggy} with relatively high probabilities while a direct transition to more extreme conditions like \emph{heavy rain} or \emph{thunderstorm} is less likely.
For inference, we skip the enumeration of \emph{heavy rain} and \emph{thunderstorm}, which is what we call denoising the future.
While this change may seem marginal for a minimalistic example at first, it is of great use for very large models:
First, the savings are present in each time step, accumulating and reducing the overall time.
Second, the model is made sparse, helping human understanding.
Also, the obtained sparse HMM$_{off}$ can be used in edge computing, where a full HMM may be too big to run.
Third, the error introduced is bounded and measurable and, therefore, controllable.

Let us now have a look at the runtime savings for HMM$_{on}$ with our weather example.
Assume, most of the probability mass of the current state is distributed over the events \emph{sunny}, \emph{partly cloudy}, and \emph{light rain} and only a negligible probability mass of the current state is distributed over the events \emph{foggy}, \emph{heavy rain}, and \emph{thunderstorm}.
Then, we can redistribute the probability mass from \emph{foggy}, \emph{heavy rain}, and \emph{thunderstorm} to \emph{sunny}, \emph{partly cloudy}, and \emph{light rain} and thereby set the probabilities for \emph{foggy}, \emph{heavy rain}, and \emph{thunderstorm} to $0$.
Having certain events with a probability of $0$ means that complete rows of the transition matrix can be skipped.
Another way of looking at the chance is to actually reduce the state space and completely drop the events that are not possible and accordingly reduce the dimensionality of the transition matrix. 
Being an online approach, the state has to be reduced for each time step, adding some runtime as we have to iterate over all states in each time step to filter out negligible ones, but can save quite some time, similar to the HMM$_{off}$ approach, while proceeding in time.
The online approach has another advantage: It leaves the model untouched and only alters the current state distribution.
Therefore, accounting for observations may be easier, as HMM$_{off}$ can, in theory, run into a self-loop.
Also for this approach, the introduced error is bounded and measurable and, therefore, controllable.

In both of our approaches, probability masses of unlikely events are distributed, leading to potential problems regarding minorities.
We note here that the problems arising do not originate from our approach, but from the underlying model.
Though, our approach amplifies biases present in the model, e.g., not allowing the switch from sunny to thunderstorm.
We also highlight that in applications, where our approach can lead to problems regarding minorities, tools from the field of AI should not be used at all or at least with high caution, as they are prone to biases.
However, such an approach can also be analyzed under the lens of preserving privacy.

\paragraph{Contribution}
In this paper, we propose using only the top-$p$ events for inference in HMMs as a representative for PGMs.
To that end, we propose HMM$_{off}$, which applies the idea of top-$p$ events to the transition matrix, as well as HMM$_{on}$, which applies the idea of top-$p$ events to the current state.
Both approaches inherently have the idea to create sparsity and use that to reduce the number of computations.
Specifically, we show that by using the top-$p$ events:
\begin{inparaenum}[(i)]
    \item the theoretical error for both approaches is bounded in terms of total variation distance by $\frac{1-p}{\gamma}$, where $\gamma$ is a model-specific parameter,
    \item the sparsity of distributions can be significantly increased using top-$p$ events,
    \item the introduced error in our examples is negligible, e.g., only $0.09$ for the total variation distance for top-$0.9$ in terms of total variance for HMM$_{off}$, 
    \item the runtime for inference can be reduced for orders of magnitude for HMM$_{off}$.
\end{inparaenum}

This paper is an extended version of the paper submitted to the ECSQARU 2025~\cite{ecsqaru}.
While reworked text, the extensions include correcting the proofs of the ECSQARU version and describing and analyzing using the top-$p$ states (instead of transitions) as well as the combination of both in the evaluation.

\paragraph{Related Work}
Rabiner introduces HMMs~\cite{hmm} and proposes to use the Forward-Backward algorithm~\cite{baum1966statistical} for inference.
Zhang et al. distill a language model into an HMM~\cite{gelato}.
HMMs can be extended to Dynamic Bayesian Networks (DBNs)~\cite{dbn}, which can model interactions between different state variables.
Inference is done by, e.g, the interface algorithm~\cite{murphy_diss}.
However, query answering in Bayesian networks~\cite{pearl_intro_bayesian_networks} is NP-hard~\cite{bn_qa_np}, giving rise to approximations.
Boyen and Koller represent the distribution over the random variables with successors in the next time step as a product of marginals~\cite{error_bound}, which is extended by Murphy and Weiss to always use a factored representation~\cite{murphy2001factored}.
Instead of instantiating a DBN for each time step, Gao et al. introduce a sliding window approach~\cite{gao2014approximate}.
Murphy provides an overview over approximate inference~\cite{murphy_diss}.
Though, the focus in literature is on approximate inference in DBNs or approximation for learning HMMs, but not on approximate inference within HMMs, which we propose in this paper.
Vithanage et al.~\cite{vithanage2006approximate} effectively use a top-$k$ approach, i.e., the $k$ most probable transitions, and show that this minimizes the Kullback-Leibler divergence if restricting to $k$ transitions.
Contrary, we provide a more fine-grained error assertion.
Also, when fixing $p$, one indirectly fixes $k$, too.

\paragraph{Structure}
The remainder of this paper is structured as follows:
First, we introduce HMMs.
Second, we explain our top-$p$ approaches, HMM$_{off}$ and HMM$_{on}$, and how to use it.
Third, we provide a theoretical analysis of the error and bound the error.
Fourth, we provide an empirical evaluation including remarks on the implementation of both approaches.

\section{Hidden Markov Models Generalizing Dynamic Models}
In this section, we lay the foundation for our top-$p$ approaches.
We introduce HMMs as a representative of dynamic PGMs.
An HMM consists of two random variables per time step: One for the state and one for the observation (c.f. Figure~\ref{figure:hmm}), e.g., the weather condition and whether we see a person wearing a raincoat.
The probability of an observation depends only on the current state and the probability of the next state depends only on the current state:

\begin{definition}[Hidden Markov Model~\cite{hmm}]
    A \emph{Hidden Markov Model} consists of two series of random variables $(S_t,O_t)_t$ over a set $S$ of states and $O$ of observations, where $S_t$ denotes the state random variable in time step $t$ and $O_t$ the respective observation random variable.
    The probability $P(O_t \mid S_t)$ of an observation depends only on the current state.
    The probability distribution over the states is given as the temporal behavior $P(S_t \mid S_{t-1})$ and a prior $P(S_0)$.
    The semantics for a time step $t$ is thus given by
    \begin{align}
        P(S_t \mid S_{t-1}) &= \sum_{s_{t-1} \in S_{t-1}} P(S_t \mid s_{t-1}) \cdot P(s_{t-1}) \quad \text{and} \label{eq:state_transition}\\
        P(O_t) &= \sum_{s_t \in S_t} P(O_t \mid s_t) \cdot P(s_t),
    \end{align}
    where $P(S_t)$ is the distribution specifying the probabilities $s_t$ for each state $s \in S$.
\end{definition}

Figure~\ref{figure:hmm} shows the graphical representation of an HMM.
We illustrate the definition of an HMM with our introductory example:

\begin{figure}
    \centering
    \begin{tikzpicture}[node/.style={circle, draw},
        baseline=(current bounding box.center)]
        
        \node[node] (s0) {$S_0$};
        \node[node, below=of s0] (o0) {$O_0$};
        
        \node[node, right=of s0] (s1) {$S_1$};
        \node[node, below=of s1] (o1) {$O_1$};
        
        \node[node, right=of s1] (s2) {$S_2$};
        \node[node, below=of s2] (o2) {$O_2$};

        \node[node, right=of s2] (s3) {$\dots$};
        
        \draw[->] (s0) -- (o0);
        \draw[->] (s1) -- (o1);
        \draw[->] (s2) -- (o2);

        \draw[->] (s0) -- (s1);
        \draw[->] (s1) -- (s2);
        \draw[->] (s2) -- (s3);
        
    \end{tikzpicture}
    \caption{Representation of an HMM as graph.}
    \label{figure:hmm}
\end{figure}

\begin{example}
    In each time step, the \emph{state} is the weather condition.
    The \emph{observation} is the probability of wearing a raincoat.
    Table~\ref{table:simple_weather} gives the state transition probabilities.
    The initial state distribution is uniform.
    The probability of wearing a raincoat is the sum over the \emph{rainy} conditions light rain, heavy rain, and thunderstorm.
    Thus, the probability of not wearing a raincoat is the sum over all other conditions, partly cloudy, foggy, and sunny.
    When it is sunny, the probability of partly cloudy in the next time step is $0.3$.
    The probability of wearing a raincoat when it is sunny is $0.35$.
\end{example}

\begin{table}[tbp]
    \centering
    \begin{tabular}{ccccccc}
        Next/current & \makecell{Partly\\cloudy} & \makecell{Light\\rain} & Foggy & Sunny & \makecell{Heavy\\rain} & Thunderstorm \\ \hline
        Partly cloudy & 0.3 & 0.2 & 0.3 & 0.3 & 0.1 & 0.1 \\
        Light rain & 0.2 & 0.2 & 0.2 & 0.25 & 0.2 & 0.2 \\
        Foggy & 0.1 & 0.1 & 0.2 & 0.15 & 0.1 & 0.1 \\
        Sunny & 0.2 & 0.1 & 0.1 & 0.2 & 0.1 & 0.1 \\
        Heavy rain & 0.1 & 0.2 & 0.1 & 0.06 & 0.2 & 0.3 \\
        Thunderstorm & 0.1 & 0.2 & 0.1 & 0.04 & 0.3 & 0.2
    \end{tabular}
    \caption{State transition probabilities giving the probability of transitioning to the next state (row) given the current state (column).}
    \label{table:simple_weather}
\end{table}

Now, the important part is that the sum in Equation~\ref{eq:state_transition} goes over all possible states, rendering the equation inefficient for large state spaces, like in language models.
Our goal is to reduce the number of states that have to be enumerated.
While an HMM seems simple, it can be thought of as a generalized dynamic PGM: We have a, potentially complex, state $S_t$ emitting observations $O_t$.
Let us denote the distributions $P(s_t), \,s_t \in S,$ as \emph{forward message} updating the state probabilities through time.
DBNs~\cite{dbn} can be viewed as HMMs by treating the joint probability distribution over the interface~\cite{murphy_diss} as the state and all other random variables jointly as the observation.
Furthermore, GPT-based language models can be viewed as HMMs and learning the forward message in a highly complex state space.
At their core, they can be viewed as HMMs, interpreting, analogous to DBNs, each possible complex internal state of the language model as a state of an HMM with respective transition and observation (or emission) probabilities.
Generating sentences from language models is then the same as predicting states and sampling observations from HMMs, which leads to a noisy distribution in the far future without any observations.

Throughout this paper, we use HMMs as a representative for dynamic PGMs.
The same ideas apply to Kalman filters~\cite{ai_modern} for continuos variables.
But for illustrative purposes, we show the idea for discrete variables in an HMM setting.
In the next section, we define the top-$p$ approach for HMMs to reduce the space that has to be enumerated for the forward message.

\section{Inference in Top-$p$ Hidden Markov Models}
Our goal is to denoise the future and reduce the runtime for inference by using only the top-$p$ events, effectively reducing the number of enumerations by not accounting for values not included in the \emph{top-$p$ events}.
We introduce two approaches following the idea of sparse enumeration:
First, we focus on the highly probable transitions, denoising the less likely ones.
With HMMs as our probabilistic model, we present our HMM$_{off}$ approach:
Given an HMM, we alter the probabilities of the next state given the current state.
For all possible next states, we only keep those with the highest probability until we accumulated a total probability mass of $p$.
Second, we keep the transition model unchanged, and only use the top-$p$ states in HMM$_{on}$.
We first formalize the top-$p$ distribution for a single random variable and then extend it to HMM$_{off}$ and HMM$_{on}$.

Before we give the definition for the top-$p$ distribution, we give a helper definition to select the top probabilities:
\begin{definition}[Top-$p$ Events]
    Given a probability distribution $P$ with a set of events $X$, the \emph{sorted order} of $P$ is a sequence $Z \subseteq X$ with the events of $X$ arranged descending according to $P(X)$.
    The \emph{top-$p$ events} of $P$ are then the set $Y \subseteq X$ built from $Z$ by taking events until their probabilities reach~$p$.
\end{definition}

For events with equal probability, an arbitrary choice is made.
The top-$p$ events constitute the set of events whose probability will be kept and scaled later on, while the probability of events not in the top-$p$ set will be set to zero.
We illustrate the sorted order and top-$p$ events with our introductory example:
\begin{example} \label{example:top_p-set}
    Let $P$ denote the probability of the weather condition in the next time step given it is sunny in the current time step.
    Table~\ref{table:simple_weather} defines the probability distribution $P$.
    The sorted order of $P$ is then: Partly cloudy, light rain, sunny, foggy, heavy rain and thunderstorm.
    For $p=0.9$, the top-$p$ events consists of partly cloudy, light rain, foggy and sunny.
    For $p=0.91$, the top-$p$ events additionally includes heavy rain.
\end{example}

\begingroup
\renewcommand{\arraystretch}{1.1}
\begin{table}[tbp]
    \centering
    \begin{tabular}{cc}
        Weather condition & Probability\\ \hline
        Partly cloudy & $\frac{1}{3}$ \\
        Light rain & $\frac{5}{18}$ \\
        Foggy & $\frac{1}{6}$ \\
        Sunny & $\frac{2}{9}$ \\
        Heavy rain & 0 \\
        Thunderstorm & 0
    \end{tabular}
    \caption{Top-$p$ distribution for the state in the next time step given it is currently sunny for $p=0.9$.}
    \label{table:weather_sunny_top_p}
\end{table}
\endgroup

Building on the top-$p$ set, we define the top-$p$ distribution:
\begin{definition}[Top-$p$ Distribution] \label{definition:top_p_distribution}
    Given a probability distribution $P$ with a set of events $X$, the \emph{top-$p$ distribution} of $P$ is another probability distribution $top_p(P) \coloneq Q$ over the same set of events $X$.
    Let $Y$ be the top-$p$ set of $P$ for a given $p$.
    Then, $Q$ is defined as
    \begin{equation}
        Q(x) = \begin{cases}
            \frac{P(x)}{P(Y)} & x \in Y \\
            0 & \text{otherwise}
        \end{cases}\ .
    \end{equation}
\end{definition}

In other words, events with high probabilities are promoted while events with low probabilities are ignored, effectively amplifying the differences.
For Example~\ref{example:top_p-set} and $p=0.9$, Table~\ref{table:weather_sunny_top_p} shows the top-$p$ distribution. 
To obtain our top-$p$ transition function, we independently apply the top-$p$ distributions to them, as an HMM consists of conditional probability distributions:

\begin{definition}[HMM$_{off}$] \label{definition:top_p_hmm}
    Given an HMM $H$ with a set $S$ of states, a set $O$ of observations, probability distributions over the states given by $P_H(S_0)$ and $P_H(S_{i+1} \mid s_i)$, for each state $s_i \in S$, and probability distributions $P_H(O_i \mid s_i)$, $s_i \in S$, over the observations, the \emph{HMM$_{off}$} of $H$ is another HMM $Q$ over the same set $S$ of states and same set $O$ of observations.
    The probability distributions of $Q$ are the top-$p$ distributions of the respective ones of $H$, that is,
    \begin{inparaenum}[(i)]
        \item $P_Q(S_0)$ is the top-$p$ distribution of $P_H(S_0)$,
        \item $P_Q(S_{i+1} \mid s_i)$ is the top-$p$ distribution of $P_H(S_{i+1} \mid s_i)$, for each state $s_i \in S$, and
        \item $P_Q(O_i \mid s_i)$ is the top-$p$ distribution of $P_H(O_i \mid s_i)$, $s_i \in S$.
    \end{inparaenum}
\end{definition}

Algorithm~\ref{algorithm:top_p} is pseudo-code for Definition~\ref{definition:top_p_distribution} and returns the top-$p$ distribution for a vector representing a probability distribution.
To get the HMM$_{off}$ for an HMM, Algorithm~\ref{algorithm:top_p} is applied to
\begin{inparaenum}[(i)]
    \item the initial state distribution,
    \item each next state distribution given the current state,
    \item and each observation distribution given the current state.
\end{inparaenum}
We illustrate the HMM$_{off}$ on our running example:

\begin{algorithm}[tb]
\caption{Apply the top-$p$ approach according to Definition~\ref{definition:top_p_distribution} to a vector}\label{algorithm:top_p}
\begin{algorithmic}
\Require $v \in [0,1]^d, p \in (0,1]$ \Comment{\emph{Input} probability distribution and $p$ value}
\Ensure $v' \in [0,1]^d$ is the top-$p$ distribution of $v$
\State $v' \gets \mathbf{0} \in \mathbb{R}^d$
\State $idx \gets \text{sort\_indices\_by\_value\_desc}(v)$ \Comment{create \emph{sorted order} of $v$}
\State $sum \gets 0$
\For{$j$ in $idx$} \Comment{$j$ is an index for $v$ and $v'$}
\State{$sum \gets sum + v[j]$}
\State{$v'[j] \gets v[j]$} \Comment{take events \dots}
\If{$sum \geq p$} \Comment{\dots until accumulated probability reaches $p$}
\State{\textbf{break}}
\EndIf
\EndFor
\State{$v' \gets \frac{v'}{sum}$} \Comment{scale to sum to one}
\end{algorithmic}
\end{algorithm}

\begin{example}
    Table~\ref{table:simple_weather_top_p} shows the state transition probabilities for HMM$_{off}$ and $p=0.7$.
    For the observations, the probability of wearing a raincoat is one for heavy rain and thunderstorm and unaltered for all other states, i.e., $0.4$ for partly cloudy, $0.6$ for light rain, $0.4$ for foggy, and $0.35$ for sunny.
\end{example}

\begingroup
\renewcommand{\arraystretch}{1.1}
\begin{table}[tb]
    \centering
    \begin{tabular}{ccccccc}
        Next/current & \makecell{Partly\\cloudy} & \makecell{Light\\rain} & Foggy & Sunny & \makecell{Heavy\\rain} & Thunderstorm \\ \hline
        Partly cloudy & $\frac{3}{7}$ & 0.25 & $\frac{3}{7}$ & 0.4 & 0 & 0 \\
        Light rain & $\frac{2}{7}$ & 0.25 & $\frac{2}{7}$ & $\frac{1}{3}$ & $\frac{2}{7}$ & $\frac{2}{7}$ \\
        Foggy & 0 & 0 & $\frac{2}{7}$ & 0 & 0 & 0 \\
        Sunny & $\frac{2}{7}$ & 0 & 0 & $\frac{4}{15}$ & 0 & 0 \\
        Heavy rain & 0 & 0.25 & 0 & 0 & $\frac{2}{7}$ & $\frac{3}{7}$ \\
        Thunderstorm & 0 & 0.25 & 0 & 0 & $\frac{3}{7}$ & $\frac{2}{7}$
    \end{tabular}
    \caption{HMM$_{off}$ transition probabilities for the simple weather HMM with $p=0.7$.}
    \label{table:simple_weather_top_p}
\end{table}
\endgroup

Having shifted the probability mass, we exploit the new distributions for efficient inference without enumerating all states.
We explain our prototypical implementation in Section~\ref{section:implementation}.

Definition~\ref{definition:top_p_distribution} does not only give rise to the initially motivated HMM$_{off}$ (c.f. Definition~\ref{definition:top_p_hmm}), but we can also use the top-$p$ distribution to denoise the state distribution at each time step, while applying the exact transition model:
Assume the current state is \emph{partly cloudy} with 0.9 and the remaining probability mass of $0.1$ is distributed among the other states.
Then, by following the top-$p$ idea, we would set the probability of \emph{partly cloudy} to $1$ and the probability for all other states to $0$ and perform the transition to the next time step as defined in the HMM.
Afterward, we again apply the top-$p$ distribution to the obtained new state.
We now formalize this approach:

\begin{definition}[HMM$_{on}$] \label{definition:top_p_state_approach}
    Given an HMM $H$ with a set $S$ of states, a set $O$ of observations, probability distributions over the states given by $P_H(S_0)$ and $P_H(S_{i+1} \mid s_i)$, for each state $s_i \in S$, and probability distributions $P_H(O_i \mid s_i)$, $s_i \in S$, over the observations, the \emph{HMM$_{on}$} of $H$ is another HMM $Q$ over the same set $S$ of states and same set $O$ of observations.
    The observation probabilities are unchanged, i.e., $P_Q(O_i \mid s_i) = P_H(O_i \mid s_i)$, $s_i \in S$.
    However, we define the current state distribution as the top-$p$ distribution of the initial state and as of the outcome of the transition model:
    \begin{align}
        P_Q(S_0) &\coloneq top_p(P_H(S_0)) \\
        P_Q(S_{i+1}) &\coloneq top_p(P_H(S_{i+1} \mid S_i) \cdot P_Q(S_i)).
    \end{align}
\end{definition}

We discuss possible implementations of HMM$_{on}$ in Section~\ref{section:implementation}.
Since we only keep the top-$p$ events, our HMM$_{off}$ and HMM$_{on}$ approaches definitely introduce an error.
For example, in Table~\ref{table:simple_weather_top_p} we see that we introduce an error by dropping the possibilities of transitioning from sunny to heavy rain or thunderstorm.
We next show that the error induced by the two top-$p$ approaches is bounded in terms of $p$

\section{Bounding the Approximation Error}
When we use the HMM$_{off}$ and HMM$_{on}$ approaches, inference is done with altered distributions, dropping events of the original one.
Therefore, there is some error involved.
In this section, we bound the approximation error introduced by the top-$p$ modeling.
The bound can then be used to determine whether to apply the top-$p$ approach or not.
Throughout this section, we use the total variation distance to measure the approximation error:

\begin{definition}[Total Variation]
The \emph{total variation distance} between two discrete probability distributions $P$ and $Q$ over the same set of possible outcomes $X$ is defined as
\begin{equation}
    \delta(P,Q) = \frac{1}{2} \sum_{x \in X} \abs{P(x) - Q(x)}.
\end{equation}
\end{definition}

We choose the total variation distance, because it allows, in contrast to the Kullback-Leibler divergence~\cite{kl_divergence}, arbitrary probabilities being zero.
Moreover, the absolute value can be resolved more easily than, e.g., square roots in the Helliner distance and the total variation fulfills the triangle equality.
We first start by bounding the error of a single time step approximation and continue to bound the error over all time steps based on the result for one time step for both top-$p$ variants.

\subsection{Approximation Error in a Single Time Step}
We show that the error introduced by a top-$p$ distribution is at most $1-p$:

\begin{theorem} \label{theorem:one-top-p}
Given a probability distribution $P$ and the top-$p$ distribution $Q$ of $P$, the total variation distance between $P$ and $Q$ is bounded by
\begin{equation}
    \delta(P,Q) \leq 1-p.
\end{equation}
\end{theorem}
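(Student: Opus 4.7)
The plan is to unfold the definition of total variation distance and split the sum over the state space according to whether the event lies in the top-$p$ set $Y$ or not. Let $s = P(Y)$; by the construction in the definition of top-$p$ set, $s \geq p$, and by Definition~\ref{definition:top_p_distribution}, $Q(x) = P(x)/s$ for $x \in Y$ and $Q(x) = 0$ for $x \notin Y$.

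Having made this split, I would compute each piece. Inside $Y$, the summand $\abs{P(x) - Q(x)} = P(x)\,(1/s - 1)$ since $s \leq 1$, which when summed over $Y$ collapses to $s \cdot (1-s)/s = 1-s$. Outside $Y$, the summand is just $P(x)$, summing to $1 - s$. Plugging both halves into the definition of $\delta$ gives $\delta(P,Q) = \tfrac{1}{2}\bigl((1-s) + (1-s)\bigr) = 1 - s$. Finally, the bound follows from $s \geq p$, which gives $\delta(P,Q) = 1 - s \leq 1 - p$.

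No step is really an obstacle: the only thing to be careful about is the inequality $s \geq p$, which is a direct consequence of the definition of the top-$p$ set (events are added to $Y$ until the accumulated probability reaches $p$, so the last included event may push the mass strictly above $p$, but never below). The entire argument is a one-shot calculation with no induction or auxiliary construction required; the nontrivial content sits entirely in the definition of top-$p$, and the $1 - p$ bound falls out cleanly because the excess mass removed outside $Y$ is exactly matched by the upward rescaling inside $Y$.
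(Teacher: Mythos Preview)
Your proof is correct and follows essentially the same approach as the paper: split the total variation sum over $Y$ and its complement, compute each piece to get $\delta(P,Q) = 1 - P(Y)$, and use $P(Y) \geq p$. The only cosmetic difference is that the paper first treats the case $P(Y) = p$ and then lifts the assumption, whereas you work directly with $s = P(Y) \geq p$ from the start.
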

\begin{proof}
    Let us denote with $Y$ the top-$p$ events of $P$.
    We first assume that the probabilities kept exactly match $p$, i.e., $P(Y)=p$ and $Q(Y)=1$.
    Moreover, for each $y \in Y$: $Q(y)=\frac{P(y)}{p}$.
    Now, we can start deriving the distance:
    \begin{align}
        \delta(P,Q) &= \frac{1}{2} \sum_{x \not\in Y} \abs{P(x)} + \frac{1}{2} \sum_{y \in Y} \abs{P(y) - Q(y)} \\
        &= \frac{1}{2} (1-p) + \frac{1}{2} \sum_{y \in Y} \abs{P(y) - \frac{P(y)}{p}} \\
        &= \frac{1}{2} (1-p) + \frac{1}{2} \sum_{y \in Y} \abs{P(y) \left(1- \frac{1}{p}\right)} \\
        &= \frac{1}{2} (1-p) + \frac{1}{2} \sum_{y \in Y} \abs{P(y)} \abs{1-\frac{1}{p}} \\
        &= \frac{1}{2} (1-p) - \left(1 - \frac{1}{p} \right) \cdot \frac{1}{2} \sum_{y \in Y} \abs{P(y)} \\
        &= \frac{1}{2} (1-p) - \left( 1- \frac{1}{p} \right) \cdot \frac{1}{2} p = 1-p.
    \end{align}
    Lifting the assumption of $P(Y)=p$, if the probabilities in the top-$p$ events exceed $p$, i.e., $P(Y) = p' \geq p$, we have $\delta(P,Q) = 1-p' \leq 1-p$.
\end{proof}

Theorem~\ref{theorem:one-top-p} will build the backbone of our proofs.
Let us investigate the additional error introduced by a single transmission through the transition model for both, HMM$_{off}$ and HMM$_{on}$.
We start with HMM$_{on}$:

\begin{theorem}[Multi-Step Bound for HMM$_{on}$] \label{theorem:max_additional_error_top_p_state}
    Given an HMM with a probability distribution $P^k$ over the set of states for each time step $k$ and the corresponding HMM$_{on}$ with a probability distribution $Q^k$ over the set of states for each time step $k$, the total variation distance is
    \begin{equation}
        \delta(P^k,Q^k) \leq (k+1) \cdot (1-p).
    \end{equation}
\end{theorem}
\begin{proof}
The total variation distance is invariant to linear transformations and thus to matrix multiplication when advancing in time.
In each time step, we add at most an error of $1-p$ when applying the top-$p$ distribution.
The claim follows by the triangle inequality.
\end{proof}

A direct consequence of Theorem~\ref{theorem:max_additional_error_top_p_state} is that in each time step, the additional error is at most $1-p$.
We show the same result for HMM$_{off}$, where we do not apply the top-$p$ distribution to the states, but to the transition model.
For the ease of the proofs, we occasionally use the matrix notation of HMMs:
\begin{definition}[Matrix Notation for HMMs] \label{definition:hmm_matrix_notation}
    Given an HMM $H$ with a set $S$ of states, a set $O$ of observations, probability distributions over the states given by $P(S_0)$ and $P(S_{k+1} \mid s_k)$, for each state $s_k \in S$, and probability distributions $P(O_k \mid s_k)$, $s_k \in S$, over the observations, the matrix $T = (P(s_i \mid s_j))_{i,j} , s_i, s_j \in S$ captures the transition model and the current state vector $\hat{s} = P(S_i)$ captures the current probability distribution over the states.
    With $\hat{s}_i$, we denote the $i$-th entry in $\hat{s}$.
    With $T_{i,j}$, we denote $P(s_i \mid s_j)$, and with $T_{:,j}$ the $j$-th column of $T$, i.e., $P(S_{k+1} \mid s_j)$.
    The $L_1$ norm $\norm{\hat{s}}_1$ sums all (absolute) entries, i.e., $\norm{\hat{s}}_1 = \sum_i \abs{\hat{s}_i}$.
\end{definition}

With Definition~\ref{definition:hmm_matrix_notation}, we get
\begin{equation}
    P(S_i) \cdot P(S_{i+1} \mid S_i) = P(S_{i+1}) = \hat{s}_{i+1} = T \cdot \hat{s}_i.
\end{equation}
Now, let us take a look at the error incurred when advancing one time step:

\begin{theorem}[One-Step Bound for HMM$_{off}$] \label{theorem:one_step_top_p}
    Given an HMM $H$ with probability distribution $P_H$ and matrix notation $T_H$ and the HMM$_{off}$ $Q$ with probability distribution $P_Q$ and matrix notation $T_Q$ over the same set of states $S$, and given a state distribution $\hat{s}$, the total variation distance between the results of advancing $\hat{s}$ in $H$ and in $Q$ is
    \begin{equation}
        \delta(T_H \cdot \hat{s}, T_Q \cdot \hat{s}) \leq 1-p.
    \end{equation}
\end{theorem}
\begin{proof}
    \begin{align}
        \delta(T_H \cdot \hat{s}, T_Q \cdot \hat{s}) &= \frac{1}{2} \norm{T_H \cdot \hat{s} - T_Q \cdot \hat{s}}_1 \\
        &= \frac{1}{2} \norm{ \sum_j{T_{H_{:,j}}} \hat{s}_j - \sum_j{T_{Q_{:,j}} \hat{s}_j}}_1 \\
        &= \frac{1}{2} \norm{\sum_j{\left(T_{H_{:,j}} - T_{Q_{:,j}} \right) \cdot \hat{s}_j}}_1 \\
        &\leq \sum_j{ \frac{1}{2} \norm{T_{H_{:,j}} - T_{Q_{:,j}}}_1 \cdot \hat{s}_j} \\
        &= \sum_j{ \delta(P_H(S \mid s_j), P_Q(S \mid s_j)) \cdot \hat{s}_j} \\
        &= \sum_j (1-p) \cdot \hat{s}_j = 1-p
    \end{align}
\end{proof}

Theorem~\ref{theorem:one_step_top_p} yields the same upper bound for the HMM$_{off}$ approach as Theorem~\ref{theorem:max_additional_error_top_p_state} does for the HMM$_{on}$ approach, i.e., $(k+1) \cdot (1-p)$, again by using the triangle inequality.
Unfortunately, the bound of $(k+1) \cdot (1-p)$ quickly approaches the trivial bound of $1$.
Boyen and Koller show that the error in terms of Kullback-Leibler divergence is bounded~\cite{error_bound}.
However, the Kullback-Leibler divergence is not applicable to our top-$p$ approaches, because we set probabilities to zero and the Kullback-Leibler divergence divides by probabilities.
Nevertheless, we use the key idea of Boyen and Koller~\cite{error_bound} to show the same bound for our total variance distance.

\subsection{Overall Approximation Error}
Boyen and Koller show for the Kullback-Leibler divergence that the error in each time step is at most $\frac{\epsilon}{\gamma}$, where $\gamma$ is a parameter inherent to the specific HMM and $\epsilon$ is the error incurred in each time step~\cite{error_bound}.
The parameter $\gamma$ of an HMM tells the minimum probability mass on which two different state distributions agree on after a single time step in the same model.
While the bounded error result is promising, we cannot use their theorems directly, because of their use of the Kullback-Leibler divergence, which is not applicable in our case.
The Kullback-Leibler divergence divides by probabilities and we set probabilities to zero, rendering the Kullback-Leibler divergence not applicable in our case.
Therefore, we use the key idea from Boyen and Koller~\cite{error_bound} to bound the error of our HMM$_{on}$ and HMM$_{off}$ approaches in terms of our used total variation distance.
We start by defining the parameter $\gamma$ and continue by proving that an error incurred in one time step diminishes by a factor of $1-\gamma$ over the next time steps.
Thereby, the crucial property is that the diverging states, i.e., the \emph{true} distribution and the \emph{approximated} distribution, are fetched through the \emph{same} transition model as in HMM$_{on}$.
Given the HMM$_{off}$ approach, the two state distributions are fetched through \emph{different} transition models.
However, in our proof, we later show that we can still use the minimal mixing rate and arrive at the diminishing factor.
Then, given that our error increases at most by $1-p$ in each time step (c.f. Theorems~\ref{theorem:max_additional_error_top_p_state} and~\ref{theorem:one_step_top_p}), we build a geometric series and prove that our overall approximation error in terms of total variation distance is at most $\frac{1-p}{\gamma}$ in each time step.

Let us first prove that an error introduced in one time step diminishes by a constant factor per time step.
For this, we define the \emph{minimal mixing rate} $\gamma$, which directly links to the diminishing factor of $1-\gamma$.
The minimal mixing rate sets the minimum probability mass two different state distributions agree on after a single time step~\cite{error_bound}.

\begin{definition}[Definition 3 from Boyen and Koller~\cite{error_bound}]
    For a Markov process with stochastic transition model $Q$ with states $\omega_i$, the \emph{minimal mixing rate} of $Q$ is
    \begin{equation}
        \gamma \coloneq \min_{i_1,i_2} \sum_j \min\left\{Q\left(\omega_j \mid \omega_{i_1}\right), Q\left(\omega_j \mid \omega_{i_2}\right)\right\}.
    \end{equation}
\end{definition}

For showing the diminishing factor of $1-\gamma$ for the error, we split the transition from one time step to the next into two transitions containing an intermediate state.
We are primarily interested in the first transition, as we show later, using the following lemma, that this one diminishes the error.

\begin{lemma}[Lemma 2 and Theorem 3 by Boyen and Koller~\cite{error_bound}] \label{lemma:intermediate_state}
    Fixing two state distributions $\varphi$ and $\psi$, the transition of a stochastic process $Q$ can be split into two steps:
    First, a transition defined by $R^{\Gamma}$ from $\Omega$ to $\tilde{\Omega}$ and, second, a transition defined by $R^{\Delta}$ from $\tilde{\Omega}$ to $\Omega$, where $\Omega = \left\{\omega_i\right\}_i$ is the state space of $Q$ and $\tilde{\Omega} = \Omega \cup \{ c \}$ with a new intermediate state $c$.
    The process $R^{\Gamma}$ preserves its state with probability $1-\gamma$ and transitions to state $c$ with probability $\gamma$.
    The process $R^{\Delta}$ transitions from $\omega_i$ to $\omega_j$ with probability $\frac{Q_{i,j}^{\Delta}}{1-\gamma}$.
    For state $c$, $R^{\Delta}$ transitions to state $\omega_j$ with probability $\sum_i \varphi(\omega_i) \frac{Q_{i,j}^{\Gamma}}{\gamma}$.
    The matrices $Q^{\Gamma}$ and $Q^{\Delta}$ arise from an additive contraction decomposition $Q = Q^{\Gamma} + Q^{\Delta}$, where $Q^{\Gamma}$ and $Q^{\Delta}$ are non-negative matrices such that, for all $i$, $\sum_j Q_{i,j}^{\Gamma} = \gamma$, and for all $j$, $\sum_i \varphi(\omega_i) Q_{i,j}^{\Gamma}=\sum_i \psi(\omega_i) Q_{i,j}^{\Gamma}$.
\end{lemma}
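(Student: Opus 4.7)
The plan is to establish the existence of the additive decomposition $Q = Q^{\Gamma} + Q^{\Delta}$ with the stated row-sum and equal-image properties, and then verify that the two-step process through the intermediate state $c$ reproduces a single application of $Q$ when started from either $\varphi$ or $\psi$.

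For the existence of $Q^{\Gamma}$, I would start from the pairwise interpretation of the minimal mixing rate: by definition, for any two row indices $i_1, i_2$, the pointwise minimum $\min\{Q_{i_1,j}, Q_{i_2,j}\}$ summed over $j$ is at least $\gamma$. The construction then proceeds by choosing a coupling $\pi$ on pairs $(i_1, i_2)$ whose first and second marginals are $\varphi$ and $\psi$, and, for each such pair, extracting a common sub-probability distribution on $\Omega$ of total mass exactly $\gamma$ that is dominated by both rows $Q_{i_1,\cdot}$ and $Q_{i_2,\cdot}$. Aggregating these contributions over $\pi$ yields the entries of $Q^{\Gamma}$; by construction each row sums to $\gamma$, and $Q^{\Delta} \coloneq Q - Q^{\Gamma}$ is non-negative with row sums $1-\gamma$, so that $R^{\Delta}(\omega_i \to \cdot)$ is a valid distribution.

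The equal-image condition $\sum_i \varphi(\omega_i)Q^{\Gamma}_{i,j} = \sum_i \psi(\omega_i)Q^{\Gamma}_{i,j}$ for every $j$ will then follow from the choice of coupling: since $\pi$ has marginals $\varphi$ and $\psi$, the $\varphi$-weighted and $\psi$-weighted column sums of $Q^{\Gamma}$ both collapse to the same $\pi$-weighted aggregate of the shared mass assigned to each pair $(i_1, i_2)$. This is the main obstacle, and the argument is essentially that of Boyen and Koller: it hinges on the observation that only the positive and negative parts of the signed measure $\varphi - \psi$ need to be balanced, so the pairwise mixing lower bound is sufficient even when the global row-minimum $\sum_j \min_i Q_{i,j}$ falls short of $\gamma$.

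Once $Q^{\Gamma}$ is in hand, the two-step equivalence is a short direct computation. Applying $R^{\Gamma}$ to $\varphi$ produces mass $(1-\gamma)\varphi(\omega_i)$ on each $\omega_i$ and total mass $\gamma$ on the intermediate state $c$. A subsequent application of $R^{\Delta}$ yields, at $\omega_j$,
\begin{equation*}
    \sum_i (1-\gamma)\varphi(\omega_i)\cdot \frac{Q^{\Delta}_{i,j}}{1-\gamma} + \gamma \cdot \sum_i \varphi(\omega_i)\frac{Q^{\Gamma}_{i,j}}{\gamma} = \sum_i \varphi(\omega_i)\bigl(Q^{\Delta}_{i,j} + Q^{\Gamma}_{i,j}\bigr) = \sum_i \varphi(\omega_i) Q_{i,j},
\end{equation*}
matching one step of $Q$ applied to $\varphi$. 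The analogous identity for $\psi$ follows after invoking the equal-image property to rewrite the second sum as $\sum_i \psi(\omega_i)Q^{\Gamma}_{i,j}$, at which point both terms are weighted by $\psi$ and collapse to $\sum_i \psi(\omega_i) Q_{i,j}$.
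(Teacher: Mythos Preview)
The paper does not supply its own proof of this lemma: it is stated as Lemma~2 and Theorem~3 of Boyen and Koller and then used as a black box in the proof of the subsequent contraction theorem. There is consequently no argument in the paper against which to compare your proposal.

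On the merits of your sketch: the verification that $R^{\Delta}\circ R^{\Gamma}$ reproduces one step of $Q$ on $\varphi$ and on $\psi$ is correct, and your use of the equal-image condition to handle the $\psi$ case is exactly the point of that hypothesis. The existence part, however, is not yet a proof. A coupling $\pi$ lives on pairs $(i_1,i_2)$ while $Q^{\Gamma}$ is indexed by single states, and ``aggregating these contributions over $\pi$ yields the entries of $Q^{\Gamma}$'' does not by itself specify a matrix satisfying all three requirements simultaneously (entrywise domination by $Q$, every row summing to $\gamma$, and equal $\varphi$- and $\psi$-images). Averaging the pairwise shared masses with respect to the first marginal of $\pi$ produces the $\varphi$-image you want but gives no control over the $\psi$-image, and symmetrically for the second marginal; the two need not coincide as a single matrix. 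Boyen and Koller resolve this by first isolating the common part $\min(\varphi,\psi)$, where any sub-stochastic rows of mass $\gamma$ suffice, and then explicitly balancing only the residual positive and negative parts of $\varphi-\psi$ against each other. You gesture at this in your last sentence, but that balancing construction is precisely the content of the lemma, and it is not carried out in your proposal.
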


We now show that the first transition diminishes the error by $1-\gamma$ and that the second one does not increase the error:
\begin{theorem}[Diminishing Error in Stochastic Process] \label{theorem:diminishing-error}
    For two state distributions $\varphi$ and $\psi$ and their counterparts $\varphi'$ and $\psi'$ in the next time step as defined by a stochastic process $Q$:
    \begin{equation}
        \delta(\varphi',\psi') \leq \left(1-\gamma\right) \delta(\varphi, \psi).
    \end{equation}
\end{theorem}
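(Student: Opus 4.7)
The plan is to exploit the decomposition provided by Lemma~\ref{lemma:intermediate_state}, which writes the one-step transition $Q$ as the composition $R^{\Delta} \circ R^{\Gamma}$ via the intermediate state $c$. The strategy is to show that the first half $R^{\Gamma}$ contracts total variation by exactly a factor $1-\gamma$, while the second half $R^{\Delta}$ is a genuine stochastic kernel shared by both $\varphi$ and $\psi$ and therefore does not increase total variation. Composing these two facts yields the desired bound.

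First I would analyze $R^{\Gamma}$. By its definition, each original state is preserved with probability $1-\gamma$ and mapped to the new state $c$ with probability $\gamma$. Hence, denoting by $R^{\Gamma}\varphi$ and $R^{\Gamma}\psi$ the push-forwards on $\tilde{\Omega} = \Omega \cup \{c\}$, we get mass $(1-\gamma)\varphi(\omega_i)$ and $(1-\gamma)\psi(\omega_i)$ on each $\omega_i$, and mass exactly $\gamma$ on $c$ in both cases. A direct computation then gives
\begin{equation}
    \delta\bigl(R^{\Gamma}\varphi, R^{\Gamma}\psi\bigr)
    = \tfrac{1}{2}\sum_i \abs{(1-\gamma)\varphi(\omega_i) - (1-\gamma)\psi(\omega_i)} + \tfrac{1}{2}\abs{\gamma-\gamma}
    = (1-\gamma)\,\delta(\varphi,\psi),
\end{equation}
because the contributions at $c$ cancel exactly.

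Next I would treat $R^{\Delta}$. The subtlety is that its transition out of $c$ is defined using $\varphi$; however, the agreement condition $\sum_i \varphi(\omega_i)Q^{\Gamma}_{i,j} = \sum_i \psi(\omega_i)Q^{\Gamma}_{i,j}$ from Lemma~\ref{lemma:intermediate_state} guarantees that one obtains the \emph{same} row of $R^{\Delta}$ whether one uses $\varphi$ or $\psi$. Thus $R^{\Delta}$ is a single honest stochastic kernel that acts identically on both distributions, and one can apply the standard contraction property of Markov kernels under total variation: for any stochastic kernel $K$ and probability distributions $\mu,\nu$, $\delta(K\mu, K\nu) \leq \delta(\mu,\nu)$. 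This follows quickly by writing $\delta(K\mu,K\nu) = \tfrac{1}{2}\sum_j \abs{\sum_i K_{i,j}(\mu_i - \nu_i)} \leq \tfrac{1}{2}\sum_i \abs{\mu_i - \nu_i}\sum_j K_{i,j} = \delta(\mu,\nu)$. Applying this to $K = R^{\Delta}$ and to the push-forwards from the previous step, and using $\varphi' = R^{\Delta}R^{\Gamma}\varphi$, $\psi' = R^{\Delta}R^{\Gamma}\psi$ (by the decomposition $Q = Q^{\Gamma}+Q^{\Delta}$), concludes
\begin{equation}
    \delta(\varphi',\psi') \leq \delta\bigl(R^{\Gamma}\varphi, R^{\Gamma}\psi\bigr) = (1-\gamma)\,\delta(\varphi,\psi).
\end{equation}

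I expect the only real obstacle to be verifying cleanly that $R^{\Delta}$ is well-defined as a single kernel acting on both $R^{\Gamma}\varphi$ and $R^{\Gamma}\psi$; that is, making sure the apparent $\varphi$-dependence in the definition of the transition out of $c$ is illusory and that the row genuinely coincides for $\psi$. Once that is in place, the total-variation contraction step is essentially a one-line triangle-inequality computation, and the first contraction step is a direct calculation, so the proof assembles with little friction.
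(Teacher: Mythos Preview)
Your proposal is correct and follows essentially the same route as the paper: decompose $Q$ via Lemma~\ref{lemma:intermediate_state}, compute directly that $R^{\Gamma}$ shrinks total variation by the factor $1-\gamma$ (since both push-forwards put mass exactly $\gamma$ on $c$), and then use that the stochastic kernel $R^{\Delta}$ does not increase total variation. If anything, you are more explicit than the paper on two points---why $R^{\Delta}$ is the \emph{same} kernel for $\varphi$ and $\psi$ despite its $\varphi$-dependent definition at $c$, and the one-line triangle-inequality proof of the contraction $\delta(K\mu,K\nu)\le\delta(\mu,\nu)$---whereas the paper simply asserts the latter as ``invariance under linear transformations.''
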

\begin{proof}
    We fix $\varphi$ and $\psi$.
    Using Lemma~\ref{lemma:intermediate_state}, let $\tilde{\varphi}$ and $\tilde{\psi}$ denote the respective distributions in the intermediate step.
    We show that our claim holds by showing the error $\delta(\tilde{\varphi},\tilde{\psi}) = (1-\gamma) \delta(\varphi,\psi)$ in the intermediate step and the error $\delta(\varphi',\psi') \leq \delta(\tilde{\varphi},\tilde{\psi})$ in the next time step.
    The last inequality holds because the total variation is invariant to linear transformations.

    For showing $\delta(\tilde{\varphi},\tilde{\psi}) = (1-\gamma) \delta(\varphi,\psi)$, we first note
    \begin{align}
        \tilde{\varphi}(c) &= \sum_i \gamma \varphi(\omega_i) = \gamma \quad\text{and} \\
        \tilde{\varphi}(\omega_i) &= (1-\gamma) \varphi(\omega_i),
    \end{align}
    which also hold for $\tilde{\psi}$.
    Then, we have
    \begin{align}
        \delta(\tilde{\varphi},\tilde{\psi}) &= \frac{1}{2} \sum_i \abs{\tilde{\varphi}(\omega_i) - \tilde{\psi}(\omega_i)} + \frac{1}{2} \abs{\tilde{\varphi}(c) - \tilde{\psi}(c)} \\
        &= \frac{1}{2} \sum_i \abs{\left(1-\gamma\right) \varphi(\omega_i) - \left(1-\gamma\right) \psi(\omega_i)} \\
        &= \left(1-\gamma\right) \delta(\varphi,\psi).
    \end{align}
\end{proof}

While we now have already everything at hand for proving the error bound for HMM$_{on}$, we still lack the applicability of $\gamma$ to HMM$_{off}$, since the two state distributions are not transitioned via the \emph{same} stochastic process.
We now show the diminishing factor for the top-$p$ approach, by first using the triangle inequality and second using Theorems~\ref{theorem:diminishing-error} and~\ref{theorem:one_step_top_p}:

\begin{theorem}[Diminishing Error in HMM$_{off}$] \label{theorem:diminishing-error-top_p}
    Given an HMM $H$ with matrix notation $T_H$ and the corresponding HMM$_{off}$ $Q$ with matrix notation $T_Q$, and given two state distributions $\hat{s}$ and $\hat{u}$, the total variation distance after a single transition is bounded by
    \begin{align}
        \delta(T_H \cdot \hat{s}, T_Q \cdot \hat{u}) &\leq \delta(T_H \cdot \hat{s}, T_H \cdot \hat{u}) + \delta(T_H \cdot \hat{u}, T_Q \cdot \hat{u}) \\
        &\leq (1 - \gamma) \cdot \delta(\hat{s}, \hat{u}) + \delta(T_H \cdot \hat{u}, T_Q \cdot \hat{u}) \\
        &\leq (1 - \gamma) \cdot \delta(\hat{s}, \hat{u}) + (1 - p).
    \end{align}
\end{theorem}

At this point, we have already shown that we
\begin{inparaenum}[(i)]
\item introduce an approximation error of $1-p$ when using the top-$p$ distribution once (c.f. Theorem~\ref{theorem:one-top-p}),
\item the error introduced in one time step diminishes by a factor of $1-\gamma$ for each future time step (c.f. Theorem~\ref{theorem:diminishing-error} for HMM$_{on}$ and Theorem~\ref{theorem:diminishing-error-top_p} for HMM$_{off}$), which is $(1-p)(1-\gamma)^k$ for time step $k$, and
\item in each time step, we only add an additional error of at most $1-p$ (c.f. Theorem~\ref{theorem:max_additional_error_top_p_state} for HMM$_{on}$ and Theorem~\ref{theorem:max_additional_error_top_p_state} for HMM$_{off}$).
\end{inparaenum}
We combine these results in our overall approximation guarantee for the top-$p$ approaches by analyzing the induced geometric series:

\begin{theorem}[Error Bound for HMM$_{off}$ and HMM$_{on}$] \label{theorem:error_bound}
    The total variation distance between the state distributions $Q^k$ and $P^k$ for the HMM$_{off}$ or HMM$_{on}$ $Q$ and the original HMM $P$ in each time step $k$, respectively, is
    \begin{equation}
        \delta(P^k, Q^k) \leq \frac{1-p}{\gamma},
    \end{equation}
    where $\gamma$ is the minimal mixing rate of $P$.
\end{theorem}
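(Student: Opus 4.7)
The plan is to prove this by induction on the time step $k$, combining Theorem~\ref{theorem:one-top-p} (which controls the error introduced at each step) with Theorem~\ref{theorem:diminishing-error} (which controls how old errors decay). The inductive hypothesis is precisely the claim $\delta(P^k,Q^k)\leq\frac{1-p}{\gamma}$.

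For the base case $k=0$, the distribution $Q^0$ is the top-$p$ distribution of $P^0$, so Theorem~\ref{theorem:one-top-p} gives $\delta(P^0,Q^0)\leq 1-p\leq\frac{1-p}{\gamma}$ (since $\gamma\in(0,1]$). For the inductive step, I would insert an intermediate distribution $\hat{Q}^{k+1}$ obtained by applying the \emph{true} transition $P(S_{k+1}\mid S_k)$ of the original HMM to $Q^k$, and split the error via the triangle inequality
\begin{equation*}
\delta(P^{k+1},Q^{k+1})\;\leq\;\delta(P^{k+1},\hat{Q}^{k+1})\;+\;\delta(\hat{Q}^{k+1},Q^{k+1}).
\end{equation*}
The first term compares two outputs of the \emph{same} stochastic process $P$ applied to $P^k$ and $Q^k$, so Theorem~\ref{theorem:diminishing-error} yields $\delta(P^{k+1},\hat{Q}^{k+1})\leq(1-\gamma)\,\delta(P^k,Q^k)\leq(1-\gamma)\frac{1-p}{\gamma}$. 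The second term compares the true transition and the top-$p$ transition applied to the common input $Q^k$; using the fact that total variation is convex in its arguments together with the row-wise top-$p$ bound from Theorem~\ref{theorem:one-top-p}, each conditional $P_Q(S_{k+1}\mid s_k)$ differs from $P(S_{k+1}\mid s_k)$ by at most $1-p$ in total variation, and averaging against $Q^k$ preserves the bound, giving $\delta(\hat{Q}^{k+1},Q^{k+1})\leq 1-p$.

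Adding the two contributions and telescoping gives
\begin{equation*}
\delta(P^{k+1},Q^{k+1})\;\leq\;(1-\gamma)\,\frac{1-p}{\gamma}+(1-p)\;=\;\frac{1-p}{\gamma},
\end{equation*}
which closes the induction. Equivalently, unrolling the recurrence exhibits the promised geometric series $\sum_{j=0}^{k}(1-p)(1-\gamma)^{k-j}\leq\frac{1-p}{\gamma}$, matching the intuition flagged in the paragraph preceding the theorem.

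The step I expect to require the most care is the convex-combination bound for the second term, i.e.\ showing that replacing each row of the transition matrix by its top-$p$ version changes the push-forward of $Q^k$ by at most $1-p$ in total variation. This is the one place where the matrix-valued nature of the approximation (as opposed to a single distribution) has to be handled, and it is what justifies invoking Theorem~\ref{theorem:one-top-p} globally rather than only for the initial distribution. Apart from that, the proof is a clean triangle-inequality-plus-contraction argument of the Boyen--Koller type, transported from Kullback--Leibler to total variation.
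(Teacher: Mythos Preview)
Your proposal is correct and follows essentially the same induction-plus-triangle-inequality argument as the paper: contract the previous error by $(1-\gamma)$ via Theorem~\ref{theorem:diminishing-error}, add at most $1-p$ of fresh error via Theorem~\ref{theorem:one-top-p}, and observe $(1-\gamma)\frac{1-p}{\gamma}+(1-p)=\frac{1-p}{\gamma}$. If anything, you are more explicit than the paper's own proof, which does not spell out the intermediate distribution $\hat{Q}^{k+1}$ or the convex-combination argument you flag for the second term; the paper simply asserts that ``an additional error of at most $1-p$'' is incurred and invokes Corollary~\ref{corollary:max_additional_error}.
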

\begin{proof}
We prove the claim by induction: In the first time step, the error is $1-p \leq \frac{1-p}{\gamma}$ by Theorem~\ref{theorem:one-top-p} and because of $\gamma \leq 1$.
\paragraph{Induction Step}
In time step $k$, because of the triangle inequality, the error consists of the error from time step $k-1$ and an additional error of at most $1-p$.
However, the error of time step $k-1$ is, by Theorems~\ref{theorem:diminishing-error} and~\ref{theorem:diminishing-error-top_p}, diminished by the factor $1-\gamma$.
Therefore, the error in time step $k$ is
\begin{align}
    (1-\gamma) \cdot \frac{1-p}{\gamma} + 1-p &= (1-\gamma) \cdot \frac{1-p}{\gamma} + \gamma \cdot \frac{1-p}{\gamma} \\
    &= (1-\gamma + \gamma) \cdot \frac{1-p}{\gamma} = \frac{1-p}{\gamma}.
\end{align}
\end{proof}

Please note that Theorem~\ref{theorem:error_bound} holds for both, HMM$_{off}$ and HMM$_{on}$, approaches.
We highlight that Theorem~\ref{theorem:error_bound} helps an agent to decide whether to use the top-$p$ approach beforehand based on the error bound.
Moreover, the exact total variation can be calculated for individual HMMs.
In the next section, we substantiate our theoretical margin of error empirically, show a reduced runtime for inference, and an increased sparsity.

\section{Evaluation} \label{section:evaluation}
We evaluate the top-$p$ approach on three HMMs: One favoring our approach, one that is disadvantageous for the top-$p$ approach, and a language model for a realistic application.
We run all tests on a 13th Gen Intel(R) Core(TM) i5-1345U CPU with 1.60 GHz and 16 GB of RAM.
We use NumPy~\cite{numpy} and SciPy~\cite{scipy} for matrix and sparse matrix operations, respectively.

For the different HMMs, we have trained a very simple language model (\emph{LM HMM}) as an HMM with 7620 states.
In practice, one would distill a larger model into an HMM~\cite{gelato}.
However, in this evaluation, we focus on the runtime and not on the quality of the language model.
We create a synthetic \emph{Bell HMM}, which is designed to favor the top-$p$ approach.
The Bell HMM distributes, in each next-state distribution, a probability mass of 0.9 over five states and the remaining 0.1 over 795 states, favoring our approach by having lots of unlikely successor states.
For a synthetic HMM disadvantageous for the top-$p$ approach, we create a \emph{Uniform HMM} consisting of uniform state distributions for a system of 800 states.
The number of observations equals the number of states in all HMMs.

We split the evaluation in three parts:
First, we discuss our implementation of HMM$_{off}$ and HMM$_{on}$.
Second, we compare the different top-$p$ approaches, namely
\begin{inparaenum}[(i)]
\item HMM$_{off}$ from Definition~\ref{definition:top_p_hmm},
\item HMM$_{on}$ from Definition~\ref{definition:top_p_state_approach}, and
\item HMM$_{off+on}$, being the combination of the two.
\end{inparaenum}
Afterward, we delve into an in-depth evaluation of HMM$_{off}$.

\subsection{Implementation} \label{section:implementation}
Both approaches, HMM$_{off}$ and HMM$_{on}$, introduce sparseness among the transition model or the state distribution, respectively.
Traditional inference in HMMs involves matrix-vector multiplication, which is heavily optimized in, e.g., NumPy.
We discuss our prototypical implementations for the two approaches in this subsection, before using them in the evaluation.

We start with HMM$_{off}$.
Since the transition matrix is more sparse than the original transition matrix, we use sparse matrices to only store and enumerate the non-zero entries.
We explain how to perform inference, by (sparse) matrix multiplication, in an HMM$_{off}$ $Q$ of an original HMM:
\begin{inparaenum}[(1)]
\item The HMM$_{off}$ $Q$ is built according to Definition~\ref{definition:top_p_hmm} by using Algorithm~\ref{algorithm:top_p}.
\item The obtained transition probabilities and observation probabilities are stored in a transition and observation matrix, respectively.
\item The two matrices are each transformed into \emph{compressed row storage}~\cite{barrett1994templates}:
For each row in the matrix, a tuple $(c,x)$ for each non-zero entry, where $c$ is the column number and $x$ the value.
\item For inference, sparse matrix-vector multiplication~\cite{blelloch1996programming} is used to proceed the current state distribution in time and to get the observation distribution:
The dot product between each row and the vector is calculated by summing the products of $x$ and the entry $c$ of the vector for each $(c,x)$ tuple in the row.
Sparse matrix-vector multiplication runs in time proportional to the number of non-zero entries~\cite{blelloch1996programming} and is also available for GPUs~\cite{bell2008efficient}.
\end{inparaenum}
We use SciPy~\cite{scipy} for sparse matrices and operations on them.
Regular matrix-vector multiplication runs in time $\mathcal{O}(n^2)$, so sparse matrix multiplication pays off when the sparsity is high.

We now switch to HMM$_{on}$.
Here, using sparse vectors or matrices is a lot trickier, since the state vector is, when concerning the pure number of values, smaller than the transition matrix.
When applying the HMM$_{on}$ approach, we sort the state distribution in each time step for extracting the top-$p$ distribution, leading to an additional runtime of $\mathcal{O}(n \log n)$.
The hypothesis is that the processing pays off for faster inference.
We see three (naive) possible implementations:
First, basic matrix-vector multiplication without any sparse multiplications: Apply the top-$p$ distribution to the state distribution in each time step and keep using standard NumPy~\cite{numpy} matrix-vector multiplication.
The second alternative is to use masks for matrices and arrays to skip enumerating them during multiplication.
However, masking and saving the result still requires some time iterating the vector.
The third alternative is to precompute all possible matrices arising when leaving out any subset of columns and reusing the precomputed matrices for sparse multiplication.
Obviously, the precomputation introduces an exponential overhead.
But even the overhead does not pay off, as the surrounding operations still make the online inference slower than traditional matrix-vector multiplication.

In our preliminary tests, standard matrix-vector multiplication (first alternative) is still the fastest among the two other possible implementations.
In theory, at least the third approach should be faster when measuring only the online runtime.
In practice, the overhead for sorting is too big compared to the heavily optimized matrix multiplication implementation in NumPy.
Therefore, excessive work is required to implement the HMM$_{on}$ approach efficiently.
Since we see ourselves unable to compete with NumPy in the short run, we use the naive implementation.
Next, we use our implementations to compare HMM$_{off}$ and HMM$_{on}$.

\subsection{Comparison of HMM$_{off}$ and HMM$_{on}$ Approaches}

In this subsection, we compare the different top-$p$ approaches.
Let us briefly define the combination of HMM$_{off}$ and HMM$_{on}$:
For the \emph{HMM$_{off+on}$} approach, given an HMM with $P(S_t \mid S_{t-1})$, $S_0$, and the respective HMM$_{off}$ $P_Q(S_t \mid S_{t-1})$, the probabilities $P''$ of the state variables are given by the recursion
\begin{align}
    P''(S_t) &\coloneq top_p\left(P_Q(S_t \mid S_{t-1}) \cdot P''(S_{t-1}\right),\\
    P''(S_0) &\coloneq top_p(P(S_0)) = P_Q(S_0).
\end{align}

Throughout this subsection we fix $p=0.9$.
We start by analyzing the runtime and continue with the error in terms of total variation distance.

\paragraph{Runtime for Inference}
For the Bell HMM, the proposed HMM$_{off}$ approach is the fastest, followed by HMM$_{off+on}$.
Contrary, the HMM$_{on}$ approach is even slower than inference in the original HMM.
In the Uniform HMM, all approaches are slower than inference in the original HMM.
However, this time, the HMM$_{on}$ approach is the fastest among the variants, followed by HMM$_{off}$ and HMM$_{off+on}$ being the slowest.
For the LM HMM, the results are comparable with the Bell HMM.
Figure~\ref{figure:runtime_exact_variants} shows the results.

\begin{figure}[tbp]
     \centering
     \begin{subfigure}[b]{0.6\textwidth}
         \centering
         \includegraphics[width=\textwidth]{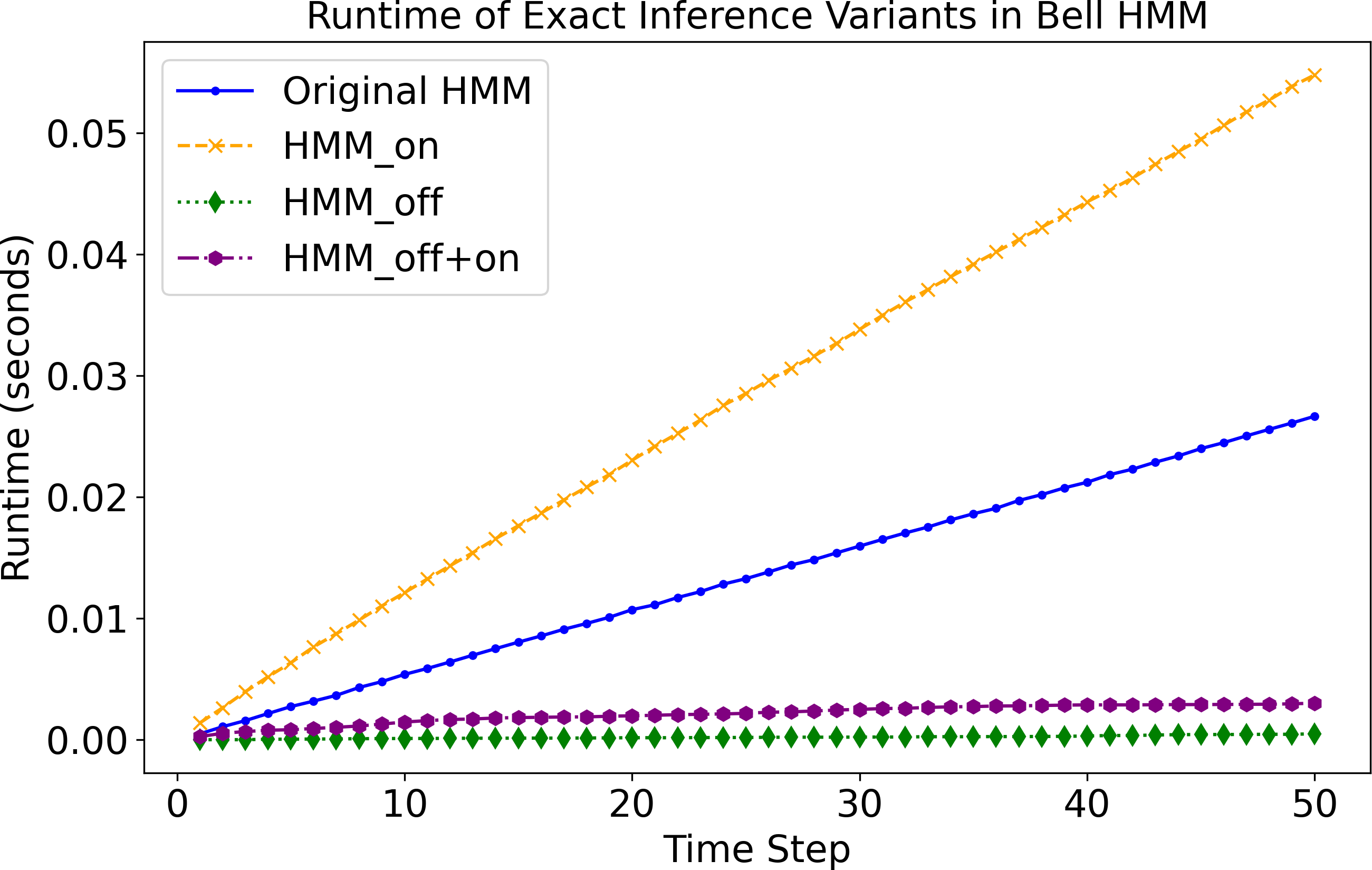}
         \caption{Bell HMM}
         \label{figure:bell_runtime_exact_variants}
     \end{subfigure}
     \hfill
     \begin{subfigure}[b]{0.6\textwidth}
         \centering
         \includegraphics[width=\textwidth]{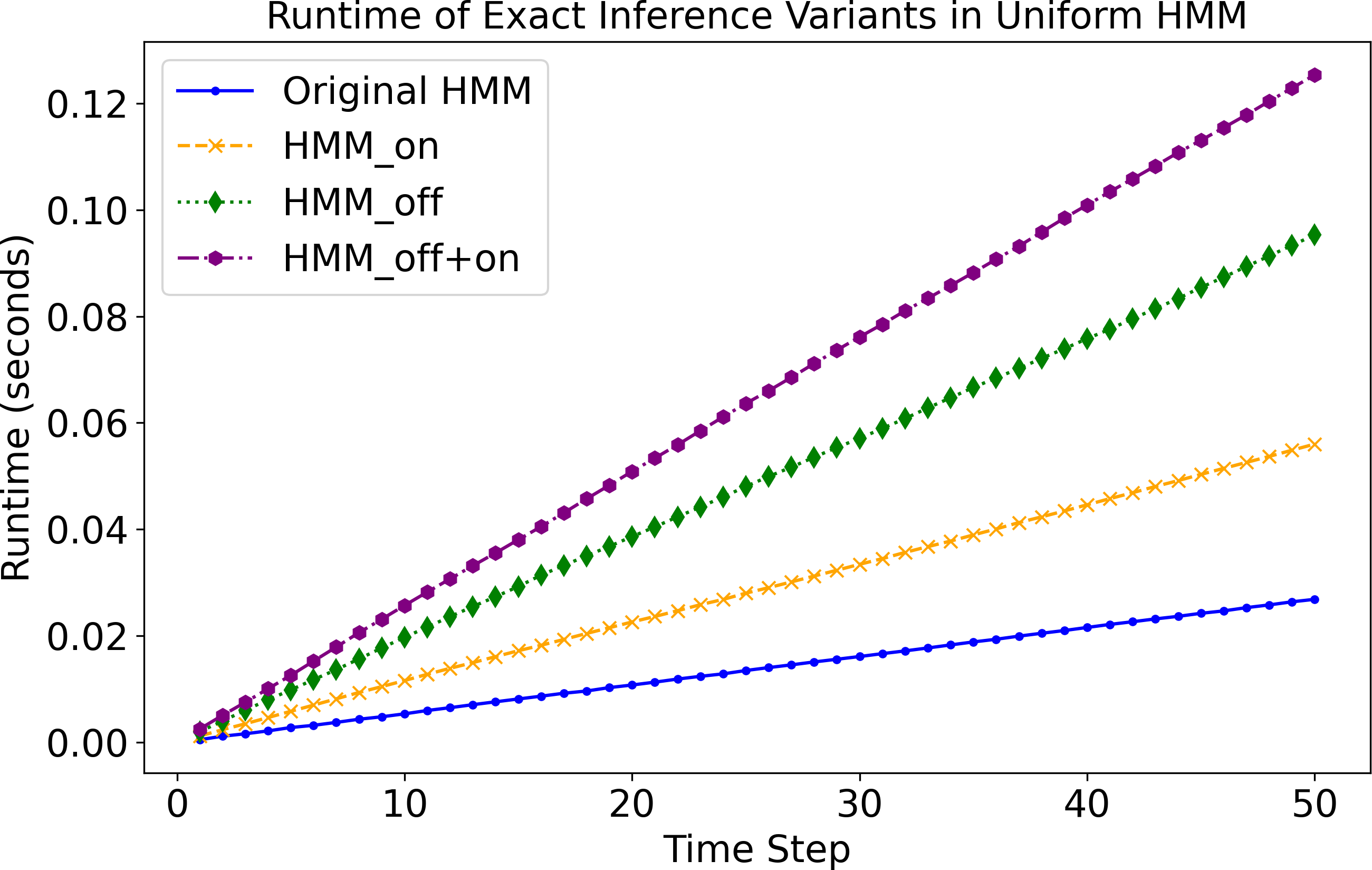}
         \caption{Uniform HMM}
         \label{figure:uniform_runtime_exact_variants}
     \end{subfigure}
     \hfill
     \begin{subfigure}[b]{0.6\textwidth}
         \centering
         \includegraphics[width=\textwidth]{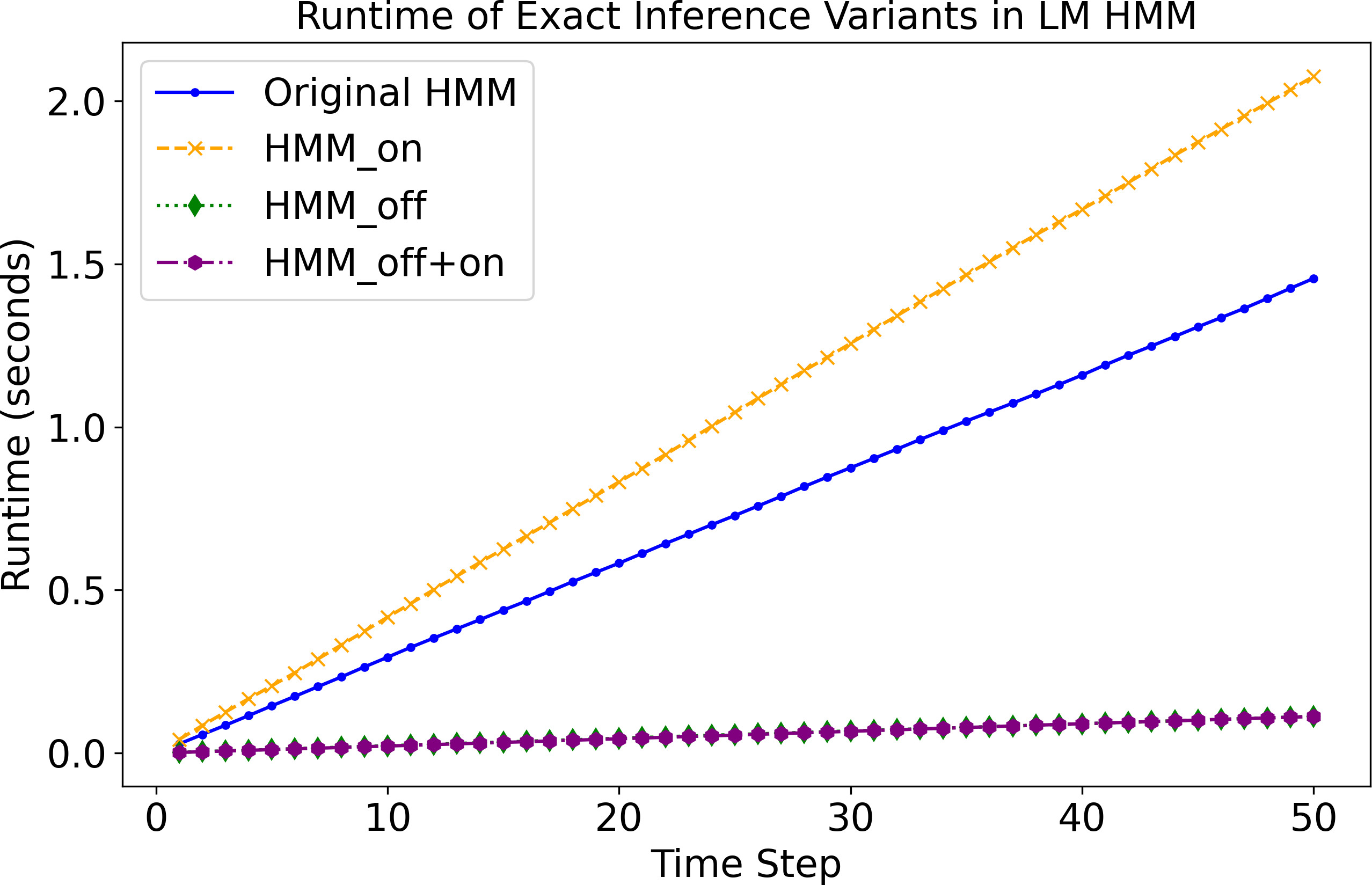}
         \caption{LM HMM}
         \label{figure:lm_runtime_exact_variants}
     \end{subfigure}
     \hfill
     \caption{Runtime for inference for the different top-$p$ variants in the test HMMs. The blue line shows the runtime in the original HMM, the green for our proposed HMM$_{off}$, yellow for HMM$_{on}$, and purple for HMM$_{off+on}$.}
     \label{figure:runtime_exact_variants}
\end{figure}

\paragraph{Runtime for Filtering}
We enter observations every five time steps.
We omit the LM HMM, since observation in text generation is not useful.
In both HMMs, Bell and Uniform, the HMM$_{off}$ approach is the only one being faster than in the original HMM.
Figure~\ref{figure:runtime_filtering_variants} shows the results.

\begin{figure}[tbp]
     \centering
     \begin{subfigure}[b]{0.8\textwidth}
         \centering
         \includegraphics[width=\textwidth]{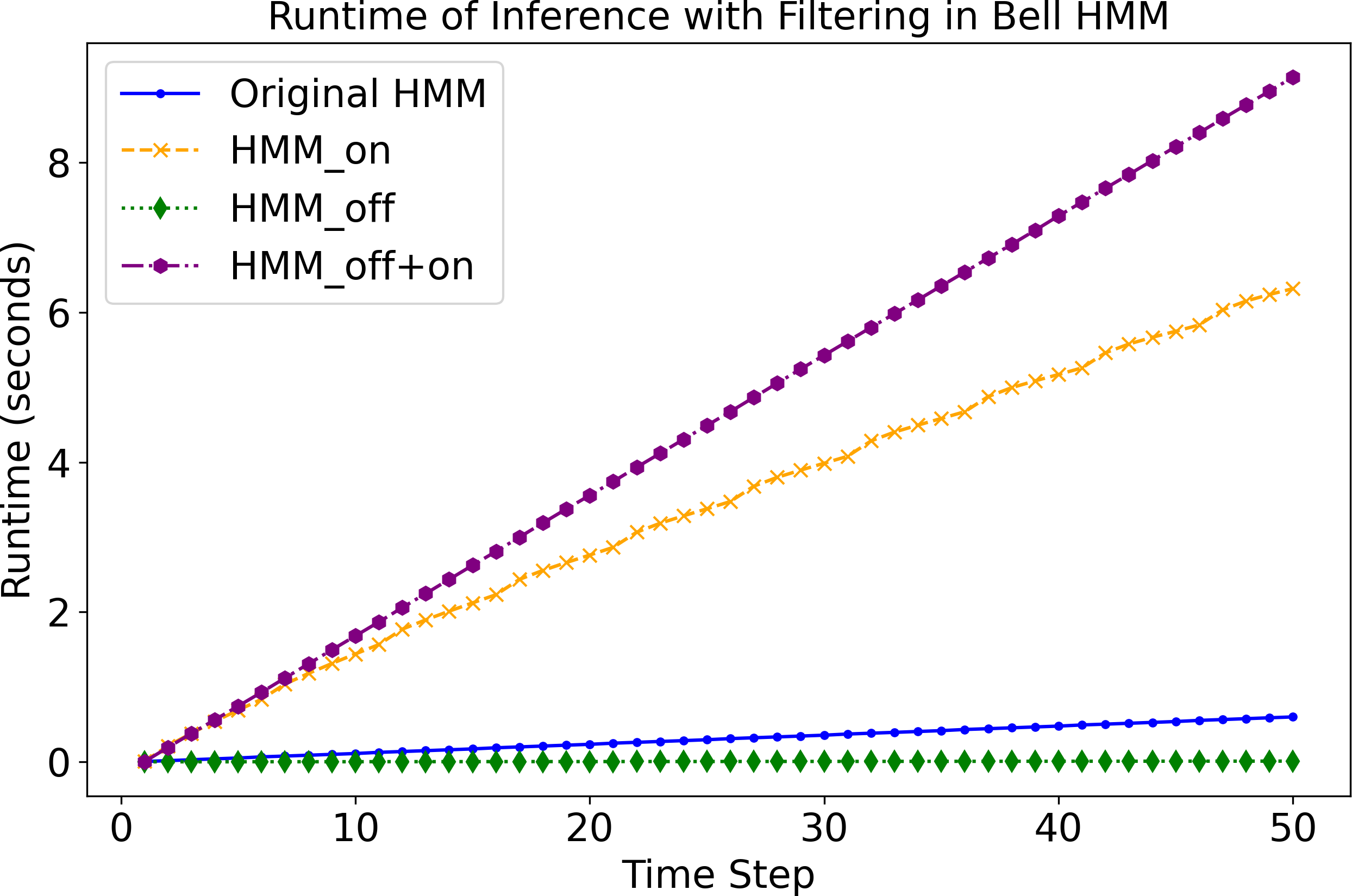}
         \caption{Bell HMM with filtering}
         \label{figure:bell_runtime_filtering_variants}
     \end{subfigure}
     \hfill
     \begin{subfigure}[b]{0.8\textwidth}
         \centering
         \includegraphics[width=\textwidth]{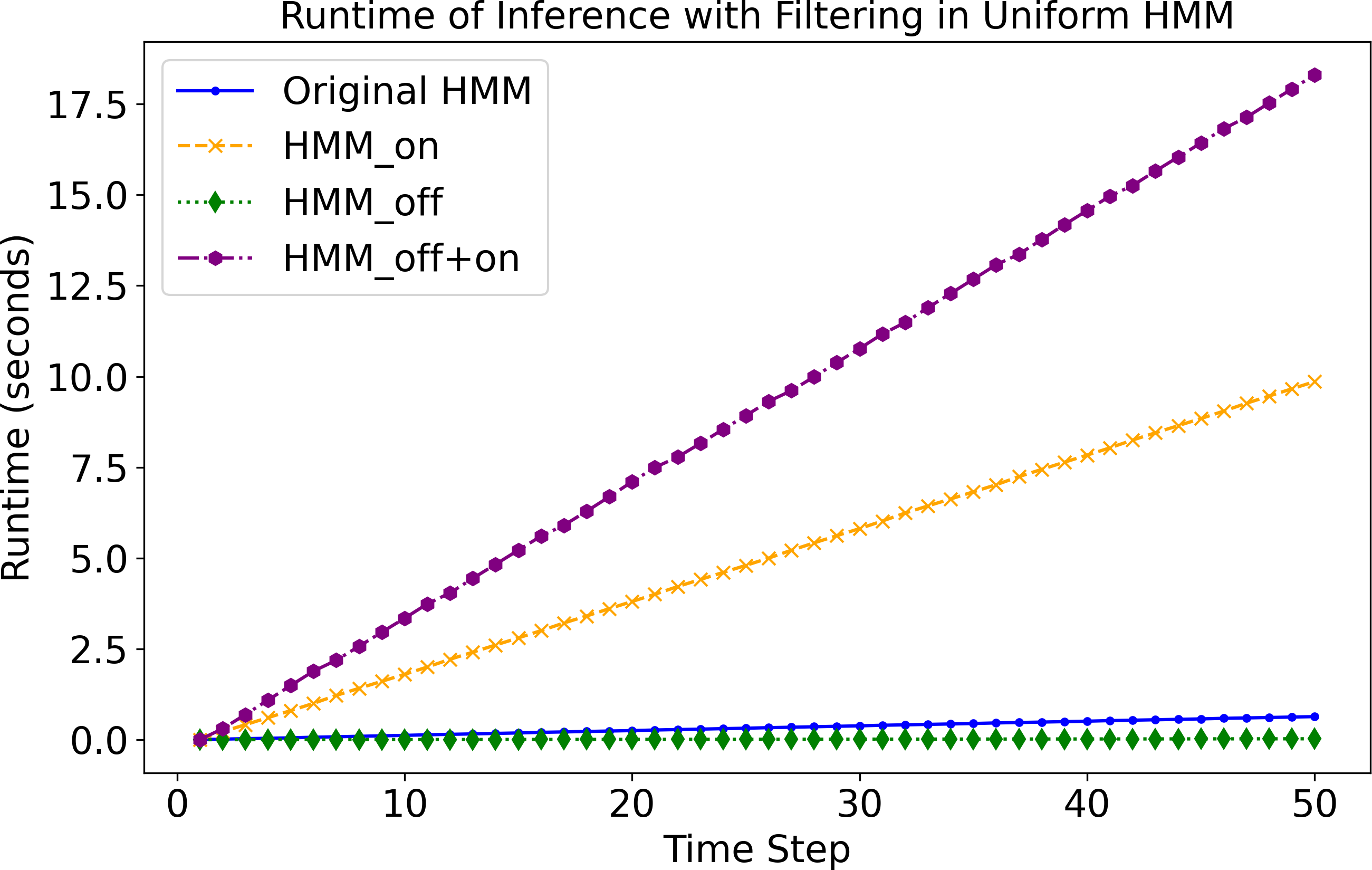}
         \caption{Uniform HMM with filtering}
         \label{figure:uniform_runtime_filtering_variants}
     \end{subfigure}
     \caption{Runtime for filtering in two test HMMs. The blue line shows the runtime in the original HMM, the green for our proposed HMM$_{off}$ approach, yellow for HMM$_{on}$, and purple for HMM$_{off+on}$.}
     \label{figure:runtime_filtering_variants}
\end{figure}

\paragraph{Total Variation Distance}
The results are somewhat similar for all three HMMs:
The HMM$_{off+on}$ approach has the biggest error, followed by the HMM$_{on}$ approach.
The HMM$_{off}$ approach has the smallest error in all three HMMs.
In the Uniform HMM, HMM$_{off}$ and HMM$_{on}$ have the same error.
In the LM HMM, however, the error of the HMM$_{on}$ is more than $1.9$ times larger than for HMM$_{off}$.
And in the Bell HMM, the error of the HMM$_{on}$ is even more than $9$ times larger than for HMM$_{off}$.
Figure~\ref{figure:total_variation_variants} shows the results.
Sine the HMM$_{on}$ approach seems to lead to a higher error, we perform small test on the running weather example (c.f. Table~\ref{table:simple_weather}):
After approximately 25 time steps, HMM$_{off}$ and HMM$_{on}$ converge to the same error, while HMM$_{on}$ converging slower.

\begin{figure}[tbp]
     \centering
     \begin{subfigure}[b]{0.6\textwidth}
         \centering
         \includegraphics[width=\textwidth]{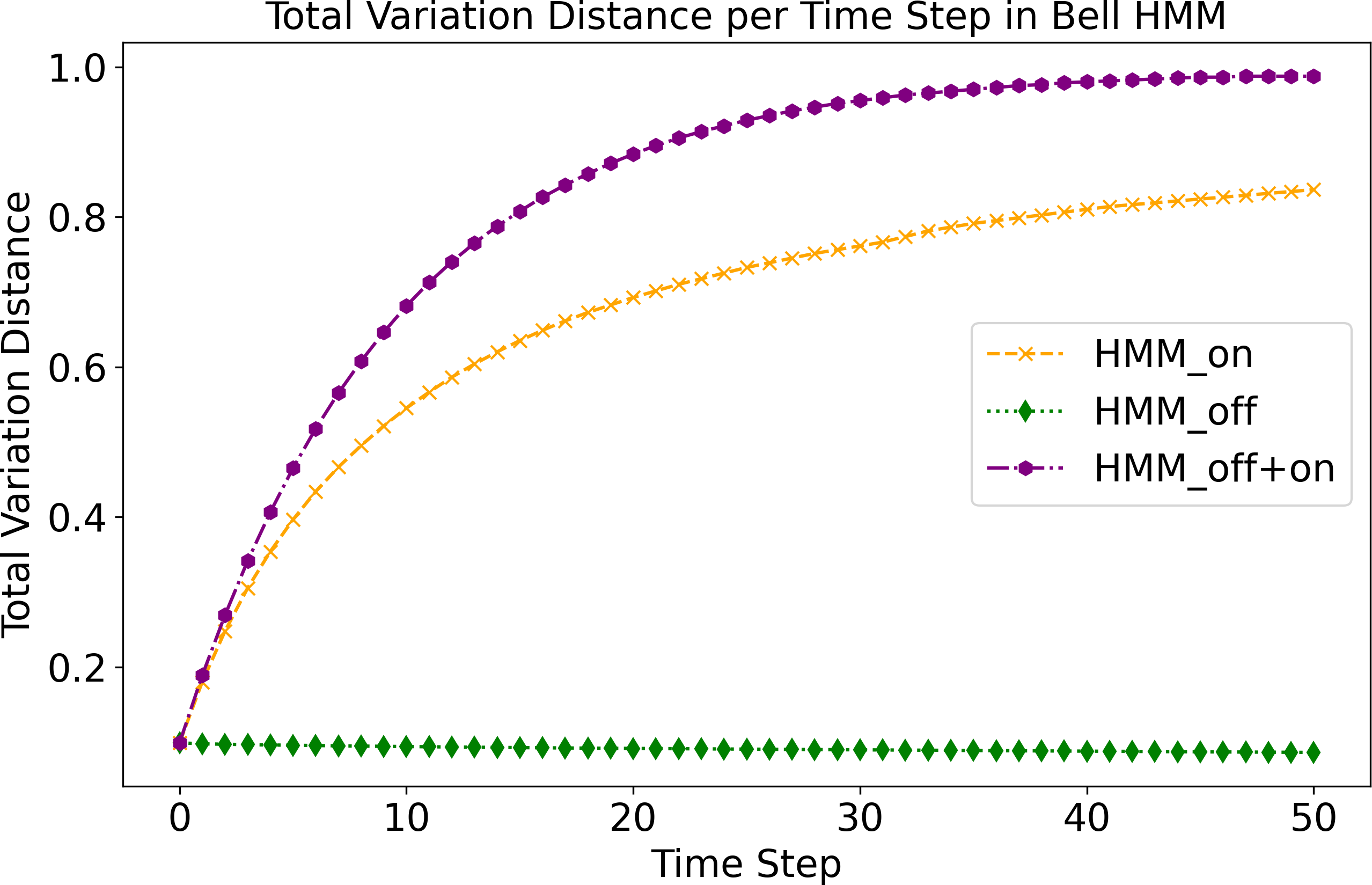}
         \caption{Bell HMM}
         \label{figure:bell_total_variation_variants}
     \end{subfigure}
     \hfill
     \begin{subfigure}[b]{0.6\textwidth}
         \centering
         \includegraphics[width=\textwidth]{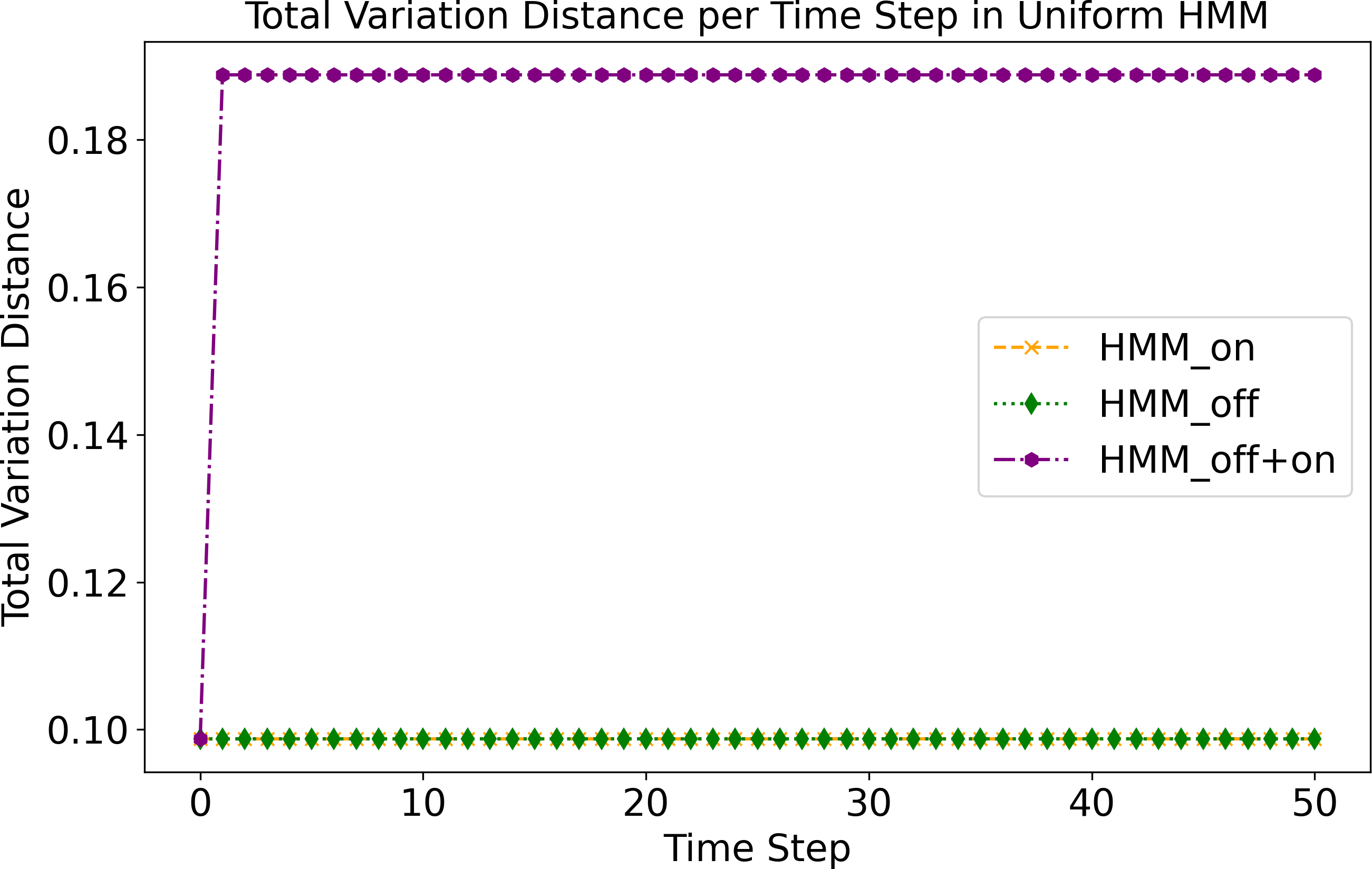}
         \caption{Uniform HMM}
         \label{figure:uniform_total_variation_variants}
     \end{subfigure}
     \hfill
     \begin{subfigure}[b]{0.6\textwidth}
         \centering
         \includegraphics[width=\textwidth]{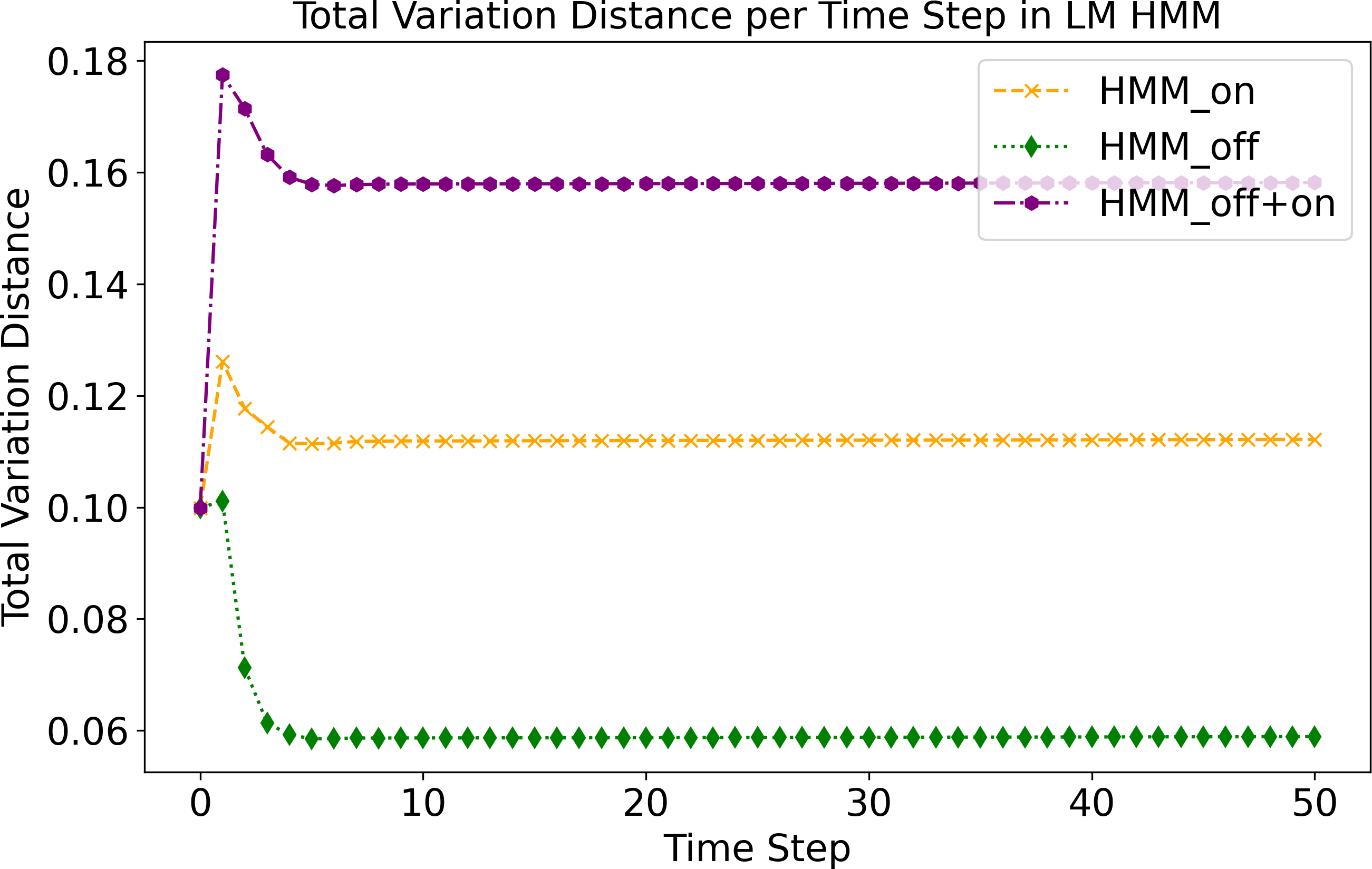}
         \caption{LM HMM}
         \label{figure:lm_total_variation_variants}
     \end{subfigure}
     \hfill
     \caption{Total variation distance for the different top-$p$ variants in the test HMMs. The green line shows the total variation distance for our proposed HMM$_{off}$ approach, yellow for HMM$_{on}$, and purple for HMM$_{off+on}$.}
     \label{figure:total_variation_variants}
\end{figure}

When analyzing the evaluation of the variants, we make two observations:
First, applying the top-$p$ distribution in each time step, i.e., for HMM$_{on}$ and HMM$_{off+on}$, is time-consuming and thus slower than the respective variant without enlarged computation in each time step.
Second, the HMM$_{on}$ approach differs in the empirical error while having the same theoretical bound.
In the Bell HMM, each state is constituted by itself and the neighboring states.
With the HMM$_{off}$ approach, we conserve this constitution.
But if some of the neighboring states are set to $0$ due to the HMM$_{on}$ approach, the state gets farther away from the original model.
In other models, such as the simple weather example, such a structure is not existent, and the error is smaller than with HMM$_{off}$.
However, all in all, HMM$_{off}$ appears to be the more promising as it is the fastest and its error is bounded.
Therefore, we evaluate HMM$_{off}$ in-depth in the next subsection.

\subsection{Evaluation of the HMM$_{off}$ Approach}
We investigate the sparsity, the runtime with and without observations, and the total variation distance of HMM$_{off}$.
We evaluate HMM$_{off}$ for the $p$ values $0.5$, $0.7$, and $0.9$ and for the maximum time step $50$.
We call the corresponding approaches top-$0.5$, top-$0.7$, and top-$0.9$ throughout this subsection.
In the runtime evaluation, we solely report numbers for $p=0,9$, as the three top-$p$ approaches are mostly comparable.
For the error investigation, we also report numbers for the other tested $p$ values.

\paragraph{Sparsity}
We define the sparsity as the ratio of zero-entries in the transition matrix.
In the Bell HMM, all three top-$p$ approaches lead to a sparsity of above $0.99$.
In the Uniform HMM, the sparsity is approximately $1-p$.
And in the LM HMM, the sparsity is for all three top-$p$ greater than $0.97$.

\paragraph{Runtime for Inference}
The top-$0.9$ HMM for the Bell HMM outperforms exact inference in the original HMM by a factor of more than $15$.
Moreover, all three top-$p$ approaches are significantly faster than inference in the original HMM and are comparable fast.
For the Uniform HMM, top-$0.9$ is around $3.39$ times slower than the original HMM.
Clearly, the lower $p$, the faster the inference.
The reason for the slower runtime is the low sparsity: Sparse matrix multiplication only pays off for high sparsity, which increases with higher $p$.
In the LM HMM, top-$p$ is more than $18$ times faster than the original HMM.
Figure~\ref{figure:runtime_exact} shows the experiments for the three tested HMMs.

\begin{figure}[tbp]
     \centering
     \begin{subfigure}[b]{0.6\textwidth}
         \centering
         \includegraphics[width=\textwidth]{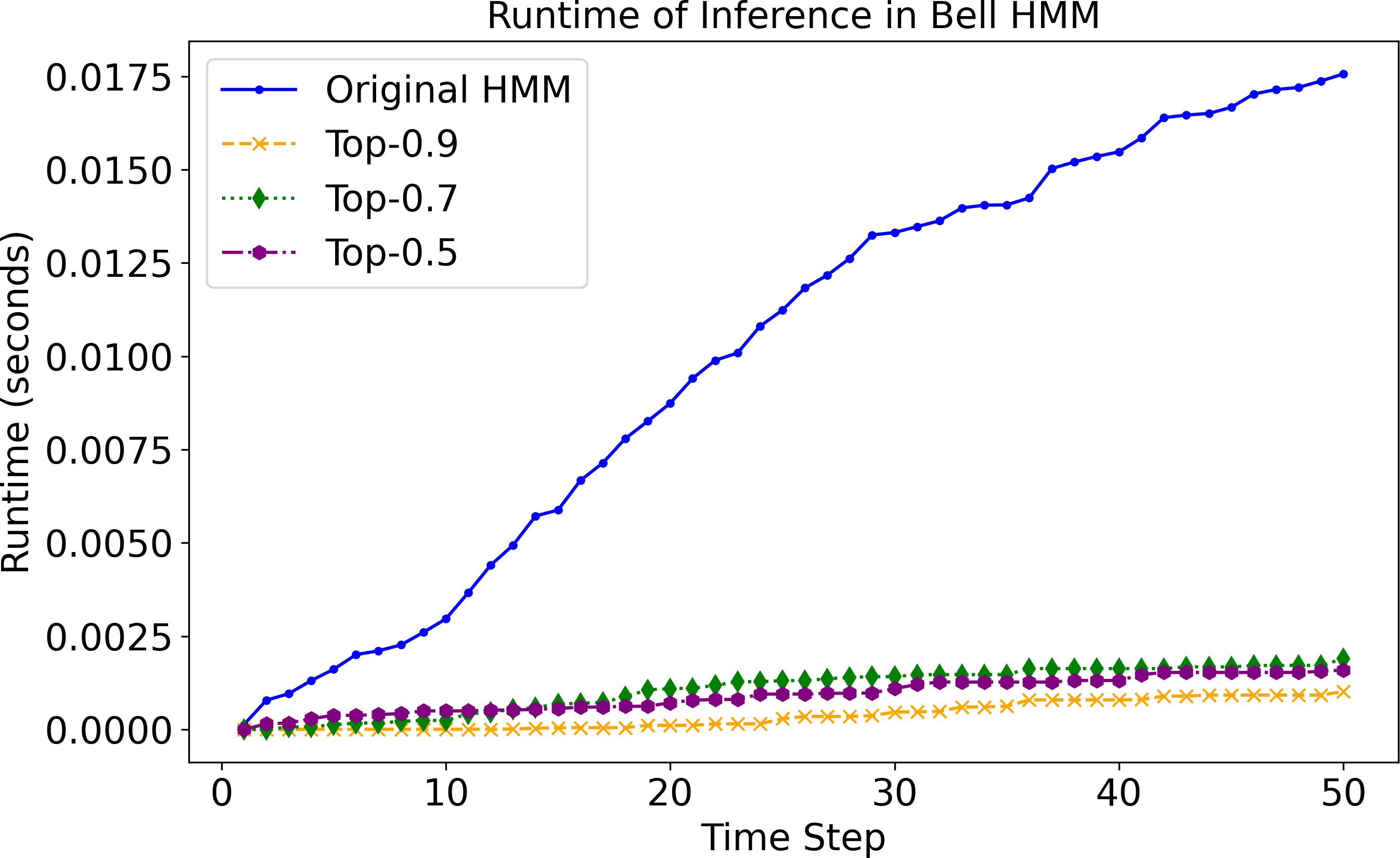}
         \caption{Bell HMM}
         \label{figure:bell_runtime_exact}
     \end{subfigure}
     \hfill
     \begin{subfigure}[b]{0.6\textwidth}
         \centering
         \includegraphics[width=\textwidth]{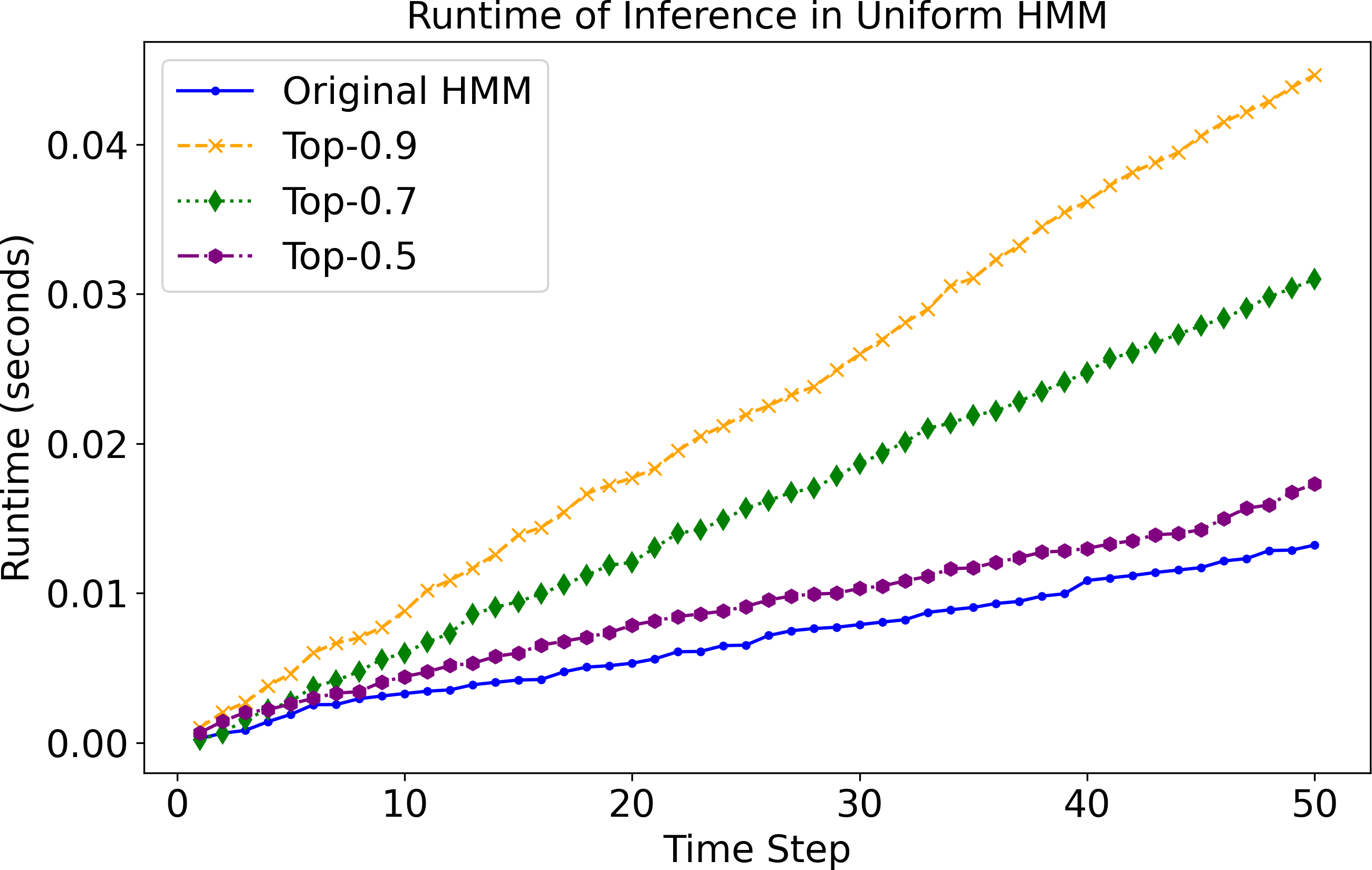}
         \caption{Uniform HMM}
         \label{figure:uniform_runtime_exact}
     \end{subfigure}
     \hfill
     \begin{subfigure}[b]{0.6\textwidth}
         \centering
         \includegraphics[width=\textwidth]{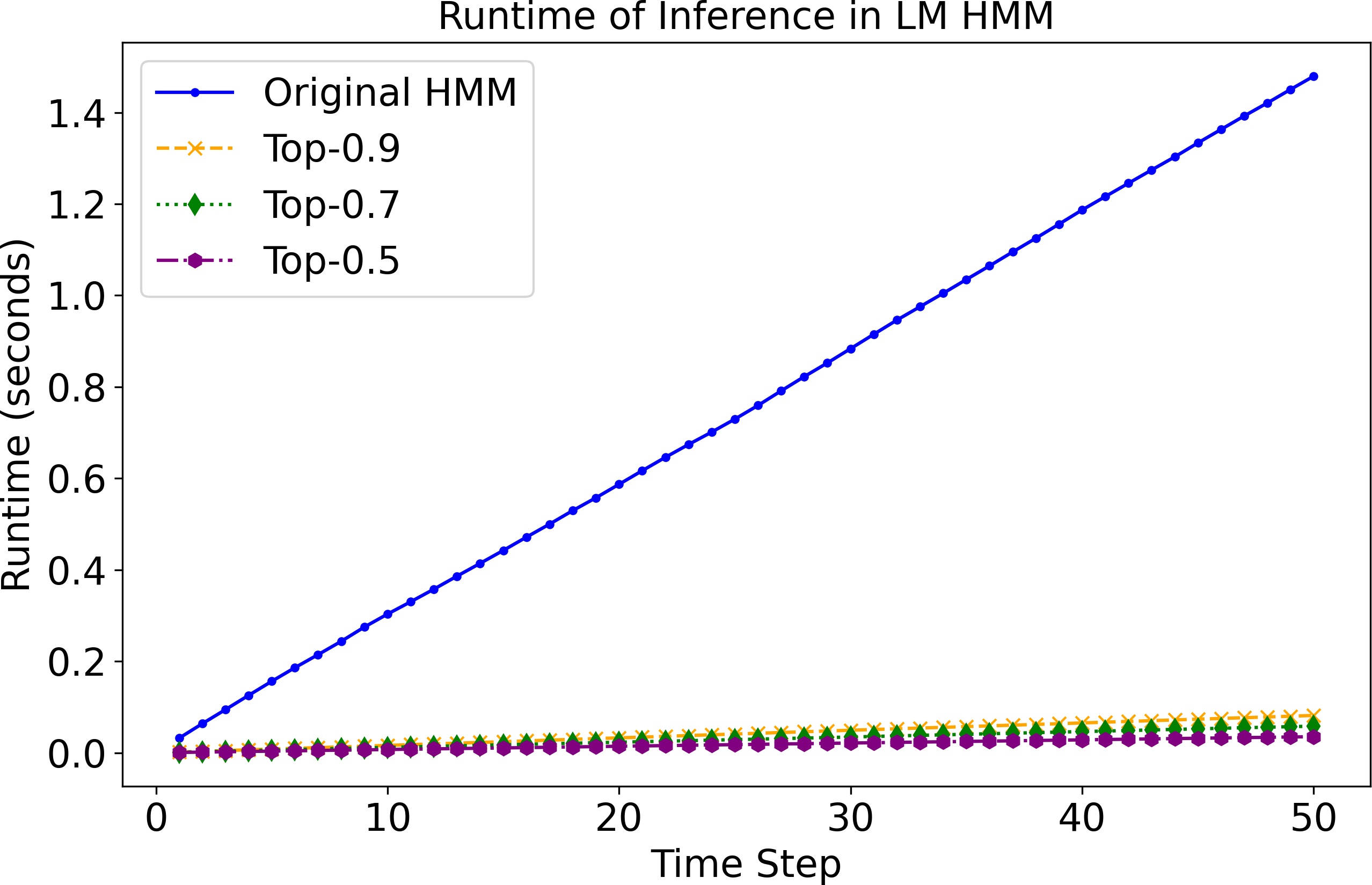}
         \caption{LM HMM}
         \label{figure:lm_runtime_exact}
     \end{subfigure}
     \hfill
     \caption{Runtime for inference in the test HMMs. The blue line shows the runtime in the original HMM and the other three ones for different top-$p$ ones applying HMM$_{off}$.}
     \label{figure:runtime_exact}
\end{figure}

\paragraph{Runtime for Filtering}
Before, we have reported runtimes without observations.
Now, we enter an observation every five time steps.
We omit the LM HMM, since observation in text generation is not useful.
In the Bell HMM, the top-$0.9$ approach is more than $77$ times faster than filtering in the original HMM.
In the Uniform HMM, in contrast to regular prediction, the top-$0.9$ approach is now more than $24$ times faster than the original HMM, due to more probabilistic calculations helping the overhead induced by sparse matrices to pay off.
Figure~\ref{figure:runtime_filtering} shows the experiment for the two tested HMMs.

\begin{figure}[tbp]
     \centering
     \begin{subfigure}[b]{0.8\textwidth}
         \centering
         \includegraphics[width=\textwidth]{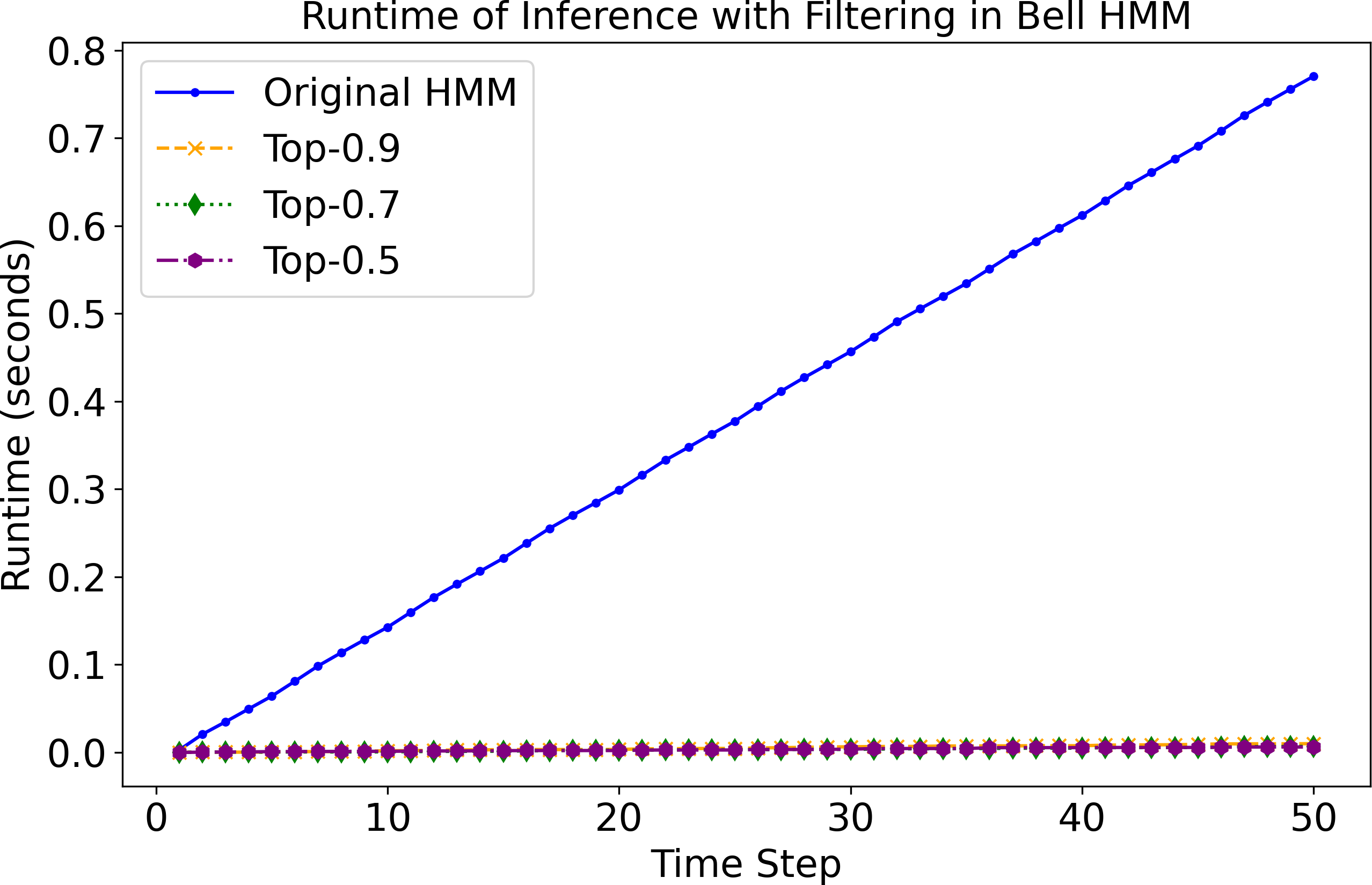}
         \caption{Bell HMM with filtering}
         \label{figure:bell_runtime_filtering}
     \end{subfigure}
     \hfill
     \begin{subfigure}[b]{0.8\textwidth}
         \centering
         \includegraphics[width=\textwidth]{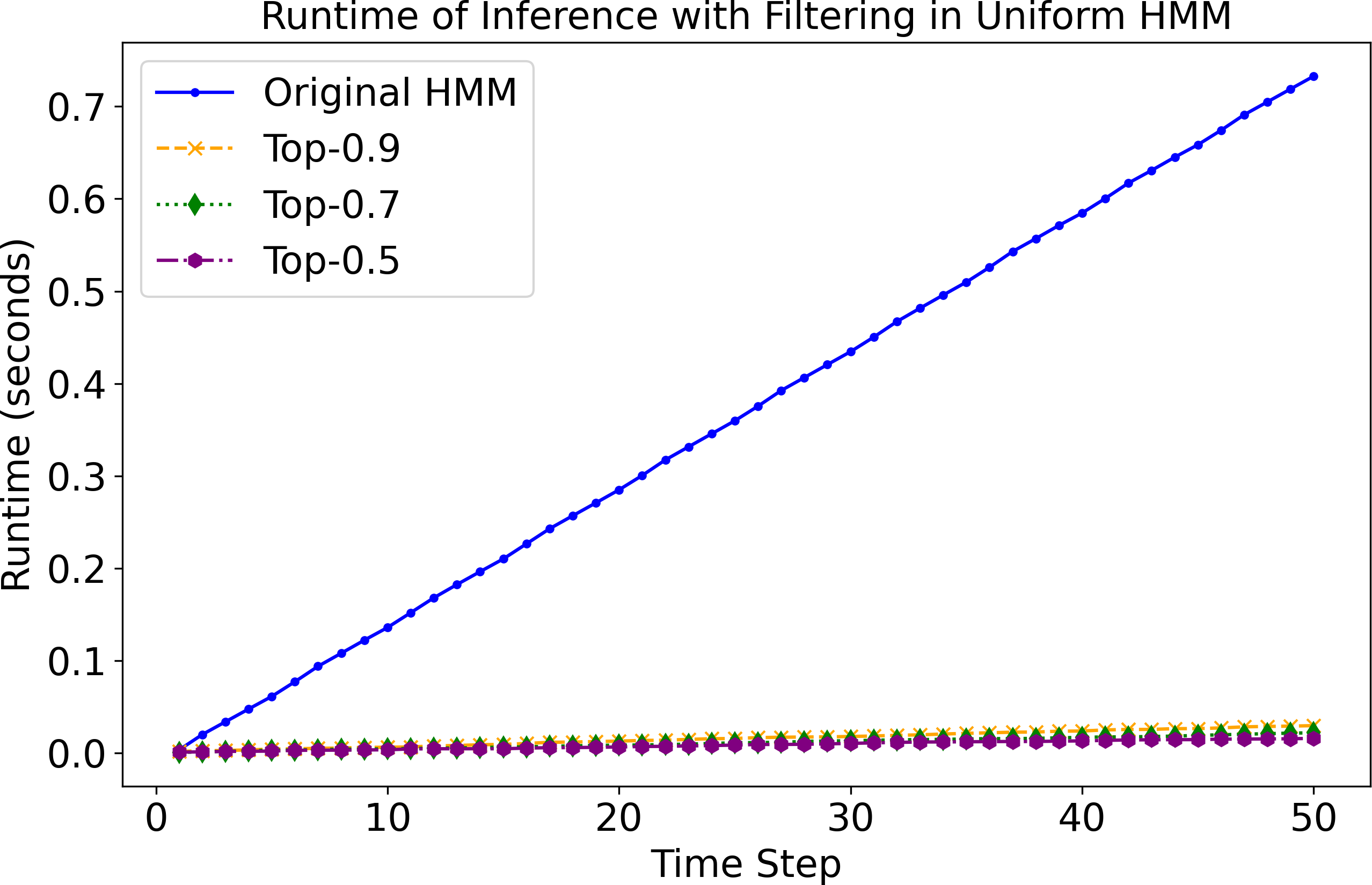}
         \caption{Uniform HMM with filtering}
         \label{figure:uniform_runtime_filtering}
     \end{subfigure}
     \caption{Runtime for filtering in two test HMMs. The blue line shows the runtime in the original HMM and the other three ones for different top-$p$ ones applying HMM$_{off}$.}
     \label{figure:runtime_filtering}
\end{figure}

\paragraph{Total Variation Distance}
The total variation in the Bell HMM is $0.086$ for top-$0.9$, $0.285$ for top-$0.7$ and $0.78$ for top-$0.5$.
In the Uniform HMM, the total variation distance is $0.099$, $0.299$ and $0.499$ for top-$0.9$, top-$0.7$ and top-$0.5$, respectively.
In the LM HMM, the total variation distance is $0.059$, $0.172$ and $0.3$ for top-$0.9$, top-$0.7$ and top-$0.5$, respectively.

\paragraph{Total Variation Distance for Filtering}
With observations, the total variation in the Bell HMM is $0.095$ for top-$0.9$, $0.491$ for top-$0.7$ and $0.846$ for top-$0.5$.
The increased error compared to without filtering is because more approximations are made in the process.
In the Uniform HMM, the total variation distance is the same as for no observations, since the model is uniform.
We again omit the LM HMM, since observation in text generation is not useful.

Regarding which $p$ to choose, our evaluation suggests to start with higher values like $0.9$ and adapt them based on the requirements for runtime and total variation.
Summing up, the top-$p$ approach using HMM$_{off}$ significantly increases sparsity to more than $0.9$ for top-$0.9$.
Moreover, HMM$_{off}$ is at least an order of magnitude faster than the original HMMs, while the total variation distance for top-$0.9$ stays below $0.09$.

\section{Conclusion}

Inference in large (dynamic) probabilistic models like HMMs, DBNs, or GPT-based language models like Llama 3, is a complex task involving expensive operations.
In particular, the whole state space needs to be enumerated for advancing in time.
We propose to only use the top-$p$ transitions in our HMM$_{off}$ approach, i.e., the most probable transitions per state with an accumulated probability of $p$.
Using only the top-$p$ transitions and setting the probability for all other ones to zero, we can significantly speed up inference by at least an order of a magnitude.
Moreover, the error introduced by HMM$_{off}$ can be bound in terms of total variation distance.
For a simple language model, the total variation distance is below $0.059$ for top-$0.9$.
Using only the top-$p$ events in the current state distribution is possible, too, with our HMM$_{on}$ approach.
For HMM$_{on}$, we can show the same error bound as for HMM$_{off}$.
However, the runtime of HMM$_{on}$ is slower than inference in the original HMM, because we need to compute the top-$p$ distribution of the current state distribution in each time step.
The computation of the top-$p$ distribution and their usage in matrix-vector multiplication appears to be slower than regular NumPy matrix-vector multiplication in the original HMM.
Furthermore, by checking the error bound, an agent can decide whether to use the top-$p$ approach. %
In future work, the top-$p$ approach can be utilized for first-order inference in DBNs.
Moreover, the obtained sparse HMM$_{off}$ can be used in edge computing, where a full HMM may be too big to run on.

\bibliographystyle{elsarticle-num-names} 
\bibliography{literature}

\end{document}